\RequirePackage{snapshot}
\documentclass{article}
\usepackage{arxiv_iclr26,times}

\usepackage{amsmath,amsfonts,bm}

\def\eqref#1{equation~\ref{#1}}

\def\1{\bm{1}}

\DeclareMathAlphabet{\mathsfit}{\encodingdefault}{\sfdefault}{m}{sl}
\SetMathAlphabet{\mathsfit}{bold}{\encodingdefault}{\sfdefault}{bx}{n}

\newcommand{\E}{\mathbb{E}}

\iclrfinalcopy
\usepackage[hidelinks]{hyperref}
\hypersetup{
    colorlinks=true,
    citecolor=cyan,
    linkcolor=blue
}
\usepackage[capitalize,noabbrev]{cleveref}
\usepackage{url}
\usepackage{booktabs}
\usepackage{amsfonts}
\usepackage{amssymb}
\usepackage{amsthm}
\usepackage{nicefrac}
\usepackage{microtype}
\usepackage{xcolor}
\usepackage{enumitem,kantlipsum}
\crefname{section}{$\mathsection$}{$\mathsection\mathsection$}
\Crefname{section}{$\mathsection$}{$\mathsection\mathsection$}
\crefname{figure}{Fig.}{Fig.}
\Crefname{figure}{Figure}{Figures}
\usepackage[normalem]{ulem}
\usepackage{bbm}

\usepackage{array}
\newcolumntype{C}[1]{>{\centering\arraybackslash}m{#1}}
\newcolumntype{H}{>{\setbox0=\hbox\bgroup}c<{\egroup}@{}}
\usepackage{pifont}
\usepackage{xspace}
\usepackage{enumitem}
\usepackage{wrapfig}
\usepackage{subcaption}
\usepackage{multirow}
\usepackage{tikz}
\newcommand{\circone}{\ding{172}\xspace}
\newcommand{\circtwo}{\ding{173}\xspace}
\newcommand{\circthree}{\ding{174}\xspace}

\newtheorem{proposition}{Proposition}[section]

\usepackage{csquotes}

\newcommand{\algname}{Exploratory Annealed Decoding\xspace}
\newcommand{\alg}{EAD\xspace}

\usepackage{tcolorbox} 
\tcbset{
    colback=yellow!10,
    colframe=black,
    boxrule=0.5mm,
    arc=4mm,
    auto outer arc,
    left=2mm,
    right=2mm,
    top=2mm,
    bottom=2mm,
}

\title{Let it Calm: Exploratory Annealed Decoding for Verifiable Reinforcement Learning}

\author{
Chenghao Yang\textsuperscript{1*},~~~~~
Lin Gui\textsuperscript{2*},~~~~~
Chenxiao Yang\textsuperscript{3*}, \\
~\textbf{Victor Veitch\textsuperscript{2,4},}~~~~~~~~
\textbf{Lizhu Zhang\textsuperscript{5},}~~~~~
\textbf{Zhuokai Zhao\textsuperscript{5}} \\
\textsuperscript{1}Department of Computer Science, University of Chicago ~ \\
\textsuperscript{2}Department of Statistics, University of Chicago ~ \\
\textsuperscript{3}Toyota Technological Insitute at Chicago ~\\
\textsuperscript{4}Data Science Institute, University of Chicago ~
\textsuperscript{5}Meta AI~ \\
\texttt{\{chenghao, glin6\}@uchicago.edu}, 
\texttt{chenxiao@ttic.edu}, \\
\texttt{zhuokai@meta.com}\\
\textsuperscript{*}Equal Contribution \hspace{3mm}\parbox{0.03\textwidth}{\includegraphics[width=\linewidth]{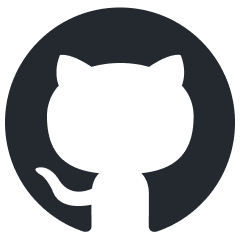}}\hspace{0.5mm}\href{https://github.com/yangalan123/EAD-RLVR}{\hspace{1mm}\texttt{Codebase} } 
\parbox{0.03\textwidth}{\includegraphics[width=\linewidth]{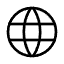}}\hspace{0.5mm}\href{https://yangalan123.github.io/ead_rlvr/}{\hspace{1mm}\texttt{Website} }
}

\begin{document}

\maketitle

\begin{abstract}
Reinforcement learning with verifiable rewards (RLVR) is a powerful paradigm for enhancing the reasoning capabilities of large language models (LLMs), yet its success hinges on effective exploration. An ideal exploration strategy must navigate two fundamental challenges: it must preserve sample quality while also ensuring training stability. While standard fixed-temperature sampling is simple, it struggles to balance these competing demands, as high temperatures degrade sample quality and low temperatures limit discovery. In this work, we propose a simpler and more effective strategy, \algname (\alg), grounded in the insight that exploration is most impactful on early tokens which define a sequence's semantic direction. \alg implements an intuitive \textit{explore-at-the-beginning, exploit-at-the-end} strategy by annealing the sampling temperature from high to low during generation. 
This dynamic schedule encourages meaningful, high-level diversity at the start, then gradually lowers the temperature to preserve sample quality and keep the sampling distribution close to the target policy, which is essential for stable training. 
We demonstrate that \alg is a lightweight, plug-and-play method that significantly improves sample efficiency, consistently outperforming fixed-temperature sampling across various RLVR algorithms and model sizes. 
Our work suggests that aligning exploration with the natural dynamics of sequential generation offers a robust path to improving LLM reasoning.
\end{abstract}

\section{Introduction}
\label{sec:introduction}

Reinforcement learning with verifiable rewards (RLVR) is a powerful approach to enhance the capabilities of Large Language Models (LLMs) in domains such as mathematical reasoning and code generation~\citep{openai2024learning, guo2025deepseek, team2025kimi, yang2025qwen3}. 
In this framework, an LLM learns by iteratively generating potential solutions (i.e., rollouts), and receiving feedback on its attempts. 
A central challenge lies in guiding language models to explore diverse yet high-quality solutions in their vast output space~\citep{cheng2025reasoning};
this reflects the long-standing hard trade-off between exploration and exploitation in RL~\citep{thrun1992efficient, sutton1998reinforcement}.

To achieve effective exploration, the sampling process can be modified to increase the variance of its underlying distribution.
However, any such modification involves two fundamental challenges.
First, it must \textbf{preserve sample quality}. 
Increasing diversity at the cost of generating low-quality, nonsensical outputs is counterproductive.
Second, it must \textbf{ensure training stability}.
Modifying the sampler creates a discrepancy between the behavior policy (used for sampling) and the target policy (being optimized), which necessitates an importance sampling (IS) correction in the gradient update~\citep{degris2012off}.
If the probability ratio in the IS weight is too large, the gradients can have high variance, destabilizing the entire training process~\citep{schulman2017proximal}.
An ideal exploration technique must therefore increase diversity while keeping the sampling distribution close enough to the target policy to allow for stable learning~\citep{haarnoja2018soft, ziegler2019fine}.

A widely adopted and principled way to this trade-off is to adjust the sampling temperature~\citep{ACKLEY1985147, hou2025t1}.
Beyond its implementation simplicity, this method is \emph{variationally optimal}, that is, maximizing entropy (thereby increasing diversity) while bounding the KL divergence from the target policy~\citep{jaynes1957information}.
However, relying on a single fixed temperature creates a tension:
high temperature promotes diversity but produces nonsensical text~\citep{renze-2024-effect, wang2025beyond},
whereas low temperature improves quality but limits exploration, leading to generic and repetitive outputs~\citep{Holtzman2020The, guo2025deepseek}.

In this work, we propose \textbf{\algname (\alg)}, a strategy that improves the balance of this trade-off by leveraging a key insight into sequential generation: exploration is not equally valuable at every step.
The initial tokens shape a sequence's semantic direction and structure, making early exploration crucial for discovering diverse valid solutions. 
Later tokens, however, fill in details within the established context, where excessive exploration can harm coherence.
This insight motivates our core strategy: \textit{explore at the beginning, exploit at the end}. 
This simple principle elegantly addresses the twin challenges of quality and stability. 
Injecting randomness early promotes diverse, high-level exploration, while reducing it later ensures completions are both coherent and close to the target policy---an essential property for stable off-policy learning.

In summary, our contributions are as follows:
\begin{enumerate}[wide, labelwidth=!, labelindent=0pt]
    \item[\circone] We propose \alg, a simple and effective exploration strategy for RLVR that dynamically anneals temperature to encourage meaningful diversity while maintaining high sample quality.
    \item[\circtwo] We show \alg is a plug-and-play enhancement that improves sample efficiency over temperature sampling, delivering robust gains across various RLVR algorithms including GRPO~\citep{shao2024deepseekmath}, DAPO~\citep{yu2025dapo}, and EntropyMech~\citep{cui2025entropy} on both small and larger models.
    \item[\circthree] We show that \alg can be adapted for test-time inference, where a tuned temperature schedule further enhances generation quality.
\end{enumerate}

\section{Preliminary}\label{sec:prelim}
\paragraph{Notations.} 

Let $x$ be a prompt from a dataset $\mathcal{D}$, and let $y = (y_1, \dots, y_{|y|})$ be a generated response sequence, where $y_{<t}$ denotes the prefix $(y_1, \dots, y_{t-1})$. We define an LLM as a policy $\pi_\theta$ parameterized by $\theta$, and denote the reference policy, i.e., the starting point for RL, as $\pi_{\mathrm{ref}}$. The probability of generating $y$ given $x$ is defined autoregressively as $\pi_\theta(y \mid x) = \prod_{t=1}^{|y|} \pi_\theta(y_t \mid [x, y_{<t}])$. A reward model $R(x, y)$ evaluates the quality of a prompt-response pair; for our RLVR experiments, we use the rule-based Math-Verify reward model.\footnote{\url{https://github.com/huggingface/Math-Verify}} Finally, we use $|\cdot|$ to denote the length of a sequence or the cardinality of a set, and use the shorthand $1:n$ for the set $\{1, \dots, n\}$.

\paragraph{Reinforcement Learning with Verifiable Rewards (RLVR).}
The standard objective for Reinforcement fine-tuning of LLMs is to maximize the expected reward over a prompt dataset $\mathcal{D}$:
\begin{equation}
\label{eq:rl-main-objective}
    \max_\theta J(\theta) := \E_{x \sim \mathcal{D}, y \sim \pi_\theta(\cdot\mid x)}\left[ R(x, y) \right].
\end{equation}
However, on complex reasoning tasks, learned reward models $R(x,y)$ are prone to \emph{reward hacking}, where they assign high scores to plausible but incorrect solutions \citep{gao2023scaling, perez2023discovering, weng2024rewardhack, wcq2025beyond}. RLVR addresses this by replacing the learned model with a verifiable, rule-based reward signal—such as a verifier that provides binary feedback on a solution's correctness \citep{guo2025deepseek}. This ensures that the policy is optimized using a reliable signal.

RLVR is commonly implemented using policy gradient algorithms like Proximal Policy Optimization (PPO) \citep{schulman2017proximal}. However, standard PPO often requires complex token-level advantage estimation and a separate value model. To better suit RLVR's trajectory-level binary rewards, subsequent methods simplify the advantage calculation \citep{shao2024deepseekmath,yu2025dapo}. A prominent example is Decoupled Clip and Dynamic Sampling Policy Optimization (DAPO)~\citep{yu2025dapo}, which optimizes:
{
\begin{align*}
&J_{\text{DAPO}}(\theta)=\\ 
&\E_{x \sim \mathcal{D},y^{(1:G)} \overset{iid}{\sim} \pi_{\theta_{\text{old}}}(\cdot\mid x)} 
\left[\tfrac{1}{\sum_{i=1}^G |y^{(i)}|}\sum_{i=1}^G\sum_{t=1}^{|y^{(i)}|}\min\left\{r^{(i)}_{t}(\theta)A_i,\mathrm{clip}\left(r^{(i)}_{t}(\theta),1-\varepsilon_{\text{low}},1+\varepsilon_{\text{high}}\right)A_i\right\}
\right]\\
&\mathrm{s.t.}\quad 0<\left|\{y^{(i)}:y^{(i)}\mathrm{~is~correct}\}\right|<G,
\end{align*}
}where $\pi_{\theta_{\text{old}}}$ refers to previous policy and $r^{(i)}_{t}(\theta)=\frac{\pi_{\theta}(y^{(i)}_{t} \mid [x, y^{(i)}_{<t}])}{\pi_{\theta_{\text{old}}}(y^{(i)}_{t} \mid [x, y^{(i)}_{<t}])}$.  The asymmetric bound $\varepsilon_{\text{high}} > \varepsilon_{\text{low}}$ is proposed to relax the restriction on probability increase and encourage more exploration in the training. 
The advantage $A_i$ is computed by normalizing the binary rewards across a batch of $G$ responses, thus avoiding the need for a value model:
\begin{small}
\begin{equation*}
A_i=\frac{R_i-\mathrm{mean}_{k\in 1:G}(R_k)}{\mathrm{std}_{k\in 1:G}(R_k)},~
\text{where } R_i=R(x,y^{(i)}).
\end{equation*}
\end{small}

\paragraph{Temperature.}
Temperature sampling \citep{ACKLEY1985147} is a widely used method to control the stochasticity of the policy $\pi_\theta$. At each generation step $t$, the LLM computes a vector of logits, $\mathbf{h}$, over the vocabulary $V$ based on the prompt $x$ and the preceding tokens $y_{<t}$. Temperature sampling rescales these logits with a parameter $\tau > 0$ before applying the softmax function to form the next-token probability distribution:
$$
\pi_\theta(y_t = v \mid [x, y_{<t}]; \tau) = \frac{\exp(h_v / \tau)}{\sum_{v' \in V} \exp(h_{v'} / \tau)},
$$
where $v$ is a token in the vocabulary $V$ and $h_v$ is its corresponding logit. The temperature $\tau$ directly modulates the sharpness of the output distribution. A higher temperature ($\tau > 1$) flattens the distribution, increasing output diversity by making less likely tokens more probable. Conversely, a lower temperature ($\tau < 1$) sharpens it, leading to more deterministic, greedy outputs. 

\paragraph{Pass@$\bm k$.} 
Pass@$k$ measures the probability that at least one of $k$ independent outputs from a language model is correct. 
Let $p_x$ denote the underlying accuracy of the language model $\pi$ given one prompt $x$. The pass@$k$ accuracy is defined as
\[
\E_{x\sim\mathcal{D}}\left[1-(1-p_x)^k\right].
\]
The inner term $1-(1-p_x)^k$ quickly approaches one unless $p_x$ is near zero. 
For example, when $k=64$, any $p_x \geq 0.0695$ already yields a probability of at least $0.99$. 
A large gap between pass@1 and pass@$k$ suggests that for some prompts, $p_x$ is essentially zero.
High diversity may benefit this metric because a model with diverse outputs is more likely to maintain non-negligible $p_x$ values across prompts, leading to stronger pass@$k$ performance.

\paragraph{Entropy.} 
Given a prompt $x$ and a prefix $y_{<t}$, the \textbf{token-level entropy} at step $t$ is defined over the policy's conditional distribution:
$$
H(Y_t \mid [x, y_{<t}]; \theta) := -\sum_{v \in V} \pi_\theta(y_t = v \mid [x, y_{<t}]) \log \pi_\theta(y_t = v \mid [x, y_{<t}]),
$$
where the sum is over all tokens $v$ in the vocabulary $V$.

The \textbf{average entropy} of the policy, $H(\pi_\theta)$, is the expected token-level entropy over all prompts and generation steps. We can estimate this value empirically using Monte Carlo sampling. For each prompt $x \in \mathcal{D}$, we generate $G$ i.i.d. responses $y^{(1)}, \dots, y^{(G)}$. The average entropy is then approximated by averaging the token-level entropies across all generated tokens:
$$
\bar{H}(\pi_\theta) \approx \frac{1}{|\mathcal{D}|} \sum_{x \in \mathcal{D}} \left( \frac{\sum_{i=1}^G \sum_{t=1}^{|y^{(i)}|} H(Y_t \mid [x, y^{(i)}_{<t}]; \theta)}{\sum_{i=1}^G |y^{(i)}|} \right).
$$

\section{Sequential Exploration: Explore Early, Exploit Late}
\label{sec:exploitation-vs-exploration}
Exploration is a cornerstone of reinforcement learning, enabling agents to discover high-quality policies rather than settling on suboptimal solutions~\citep{sutton1998reinforcement, ladosz2022exploration}. 
This principle becomes particularly vital in deep RL, where vast action spaces render exhaustive search infeasible.
In the context of RL for language models (e.g., RLVR), insufficient exploration often manifests as  \textit{entropy collapse}, i.e., a premature narrowing of the generation distribution during training~\citep{yu2025dapo, wang2025beyond, cui2025entropy}.
A common simple tool to encourage exploration is \emph{temperature sampling}.
However, a \textit{fixed} temperature imposes a difficult trade-off. 
A high temperature promotes diversity (as indicated by increased entropy\footnote{See \cref{app:entropy_proof} for the proof.}), but it risks degrading output quality with nonsensical tokens and hallucinations~\citep{renze-2024-effect, wang2025beyond}. 
In contrast, a low temperature limits the discovery of novel solutions, leading to generic and repetitive outputs~\citep{Holtzman2020The, guo2025deepseek}.

\begin{wrapfigure}{r}{0.35\textwidth}
\vspace{-0.25in}
    \centering
    \includegraphics[width=\linewidth]{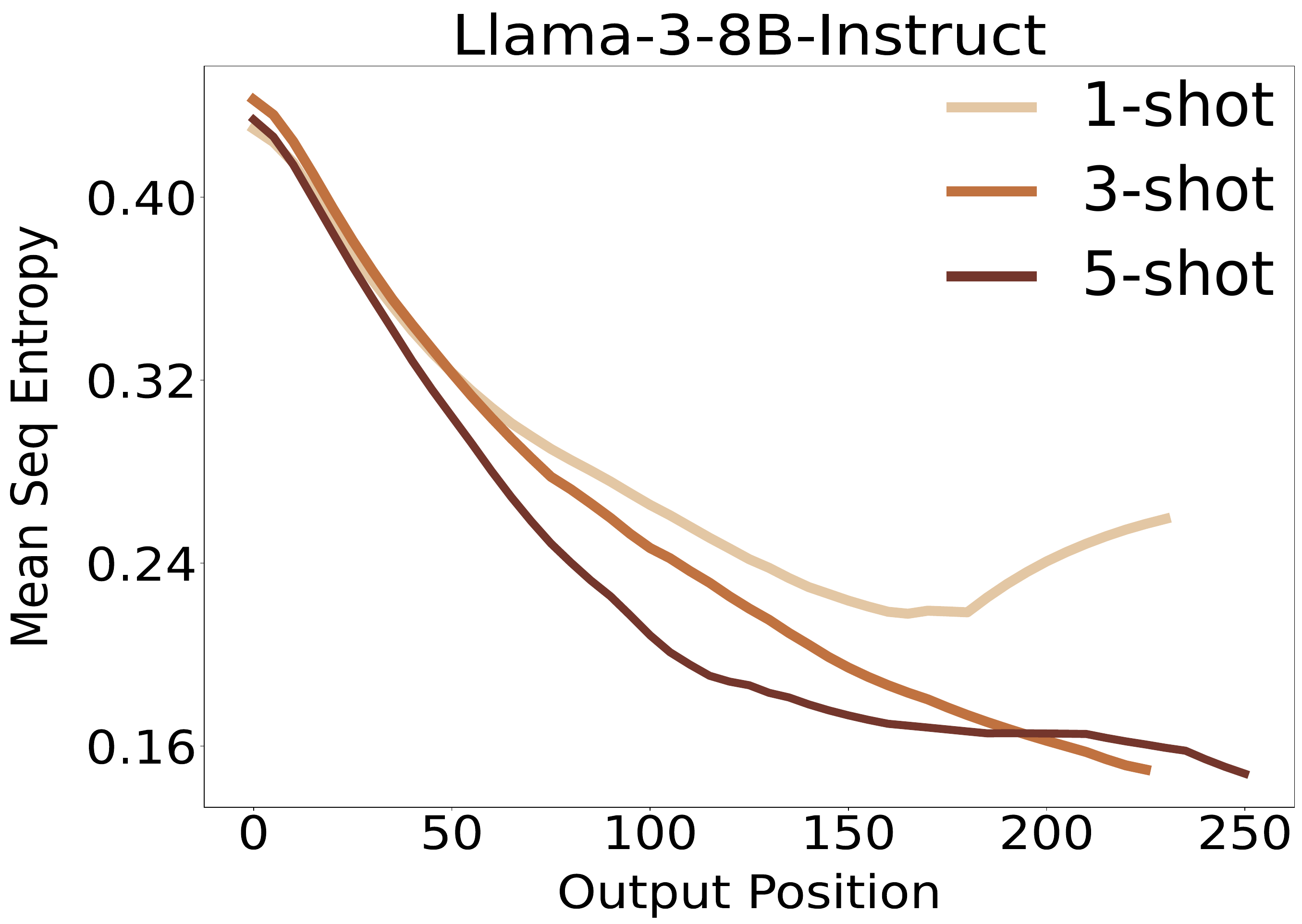}
    \vspace{-0.3in}
    \caption{
    Average entropy shrinks with output positions for Llama-3-8B-Instruct on MMLU dataset. 
    }
    \label{fig: entropy_sequential_dynamic}
\end{wrapfigure}
The key to resolving this dilemma lies not in finding a single best temperature, but in recognizing that exploration requirements vary throughout the generation process.
This key insight stems directly from the autoregressive nature of language models.
At the beginning of a sequence, the context is minimal and uncertainty is high, allowing a wide range of valid continuations. 
As more tokens are produced, the context becomes increasingly specific, constraining subsequent choices.

This intuition is supported by information theory: the data processing inequality~\citep{shannon1948mathematical} states that expected conditional entropy tends to decrease with each step\footnote{While specific rollouts may have late high-entropy positions, the probability of this is exponentially small with position $t$~\citep{yang2025alignment}, making the overall trend a reliable heuristic.}:
\begin{small}
\begin{align}
\label{eq:entropy_decay}
H(Y_t | [x, Y_{<t}]; \theta) &= \underbrace{\mathbb{E}_{y_{<t}}\big[ H(Y_t | [x, y_{<t}]; \theta) \big]}_{\substack{\text{Expected entropy at step } t \\ \text{(average over all prefixes } y_{<t}\text{)}}} 
\geq 
\underbrace{\mathbb{E}_{y_{<t+1}}\big[ H(Y_{t+1} | [x, y_{<t+1}]; \theta) \big]}_{\substack{\text{Expected entropy at step } t+1 \\ \text{(average over all prefixes } y_{<t+1}\text{)}}} = H(Y_{t+1} | [x, Y_{<t+1}]; \theta)\nonumber
\end{align}
\end{small}We further validate this empirically by examining position-wise entropy trend on the MMLU dataset~\citep{hendrycks2021measuring}\footnote{We use MMLU as a held-out dataset with Chain-of-Thought prompting~\citep{wei2022chain} to incentivize longer reasoning outputs, aligning with a typical RLVR scenario.} with Llama-3-8B-Instruct~\citep{grattafiori2024llama} (see \cref{fig: entropy_sequential_dynamic}).

\begin{wrapfigure}{r}{0.6\textwidth}
\vspace{-0.15in}
    \centering
    \includegraphics[width=\linewidth]{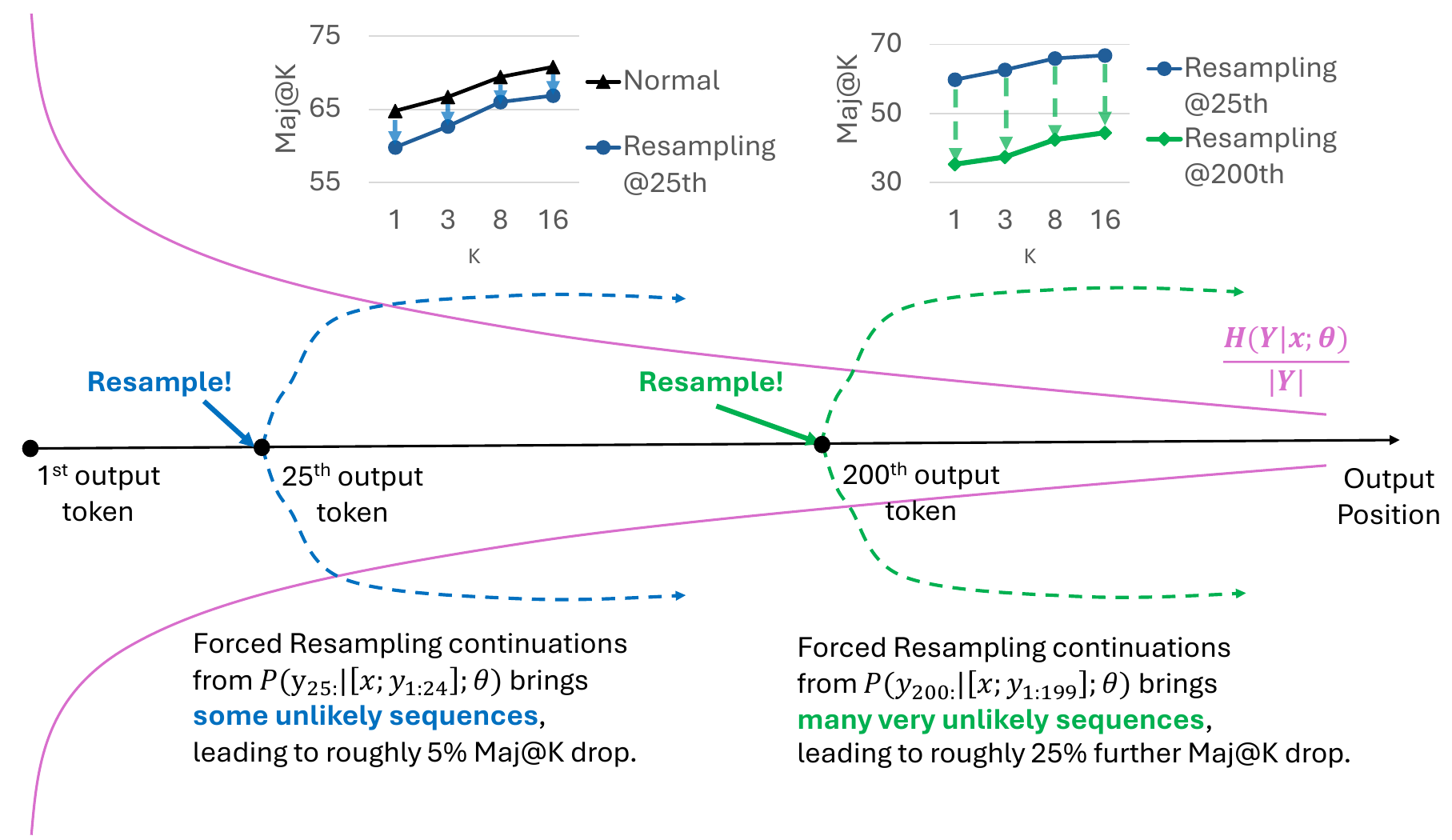}
    \vspace{-0.1in}
    \caption{
       A ``forking'' experiment on DeepSeek-Llama-3 shows early branching (high-entropy region) yields higher Maj@$k$ on MMLU than late branching (low-entropy region).
    }
\label{fig:forking_experiment}
\vspace{-0.05in}
\end{wrapfigure}
This entropy decay has a direct performance implication: effective exploration, producing diverse, high-quality responses, is most beneficial in an uncertain, high-entropy phase at the start of generation.
To verify this, we replicate a controlled ``forking'' experiment~\citep{yang2025alignment} on DeepSeek-Distilled Llama-3-8B~\citep{guo2025deepseek}, a representative RLVR-fine-tuned model from the same Llama-3 family. 
As shown in \cref{fig:forking_experiment}, trajectories branched from early generation steps consistently outperform those branched later. 
This finding is also consistent with observations from inference-time analysis, where forced, late-stage exploration tends to degrade output quality~\citep{liao2025lost, yang2025alignment, fu2025deep}.

Aligning our strategy with the natural dynamics of generation, we arrive at a simple yet powerful design principle: \textbf{explore early and exploit late}.

\section{A Method for Sequential Exploration}
\label{sec: method}
To put the principle of ``explore early, exploit late'' into practice, we introduce \emph{\algname (\alg)}, 
which uses an annealed temperature schedule starting from a higher-than-standard initial temperature (i.e., $\tau>1$). 
To adapt this strategy to RLVR, we further incorporate a \emph{global-step-aware decay rate}, ensuring that the temperature schedule remains effective as the typical response length increases during training.

\paragraph{Exploratory Annealed Decoding.} 
Instead of a fixed temperature, our method dynamically adjusts the temperature $\tau_t$ for each token $t$ in a rollout. The schedule starts at a high temperature $\tau_\mathrm{max} > 1$ and decreases progressively throughout the generation process.
Specifically, we sample the $t$-th token for one rollout 
with the token-level temperature $\tau_t = \max\{1 + \tau_\mathrm{max} - e^{t/d}, \tau_\mathrm{min}\}$,
where we apply the annealed schedule with a \emph{decay rate} $d$ controlling the annealing speed.
\begin{wrapfigure}{r}{0.6\textwidth}
\vspace{-0.2in}
    \centering
    \includegraphics[width=\linewidth]{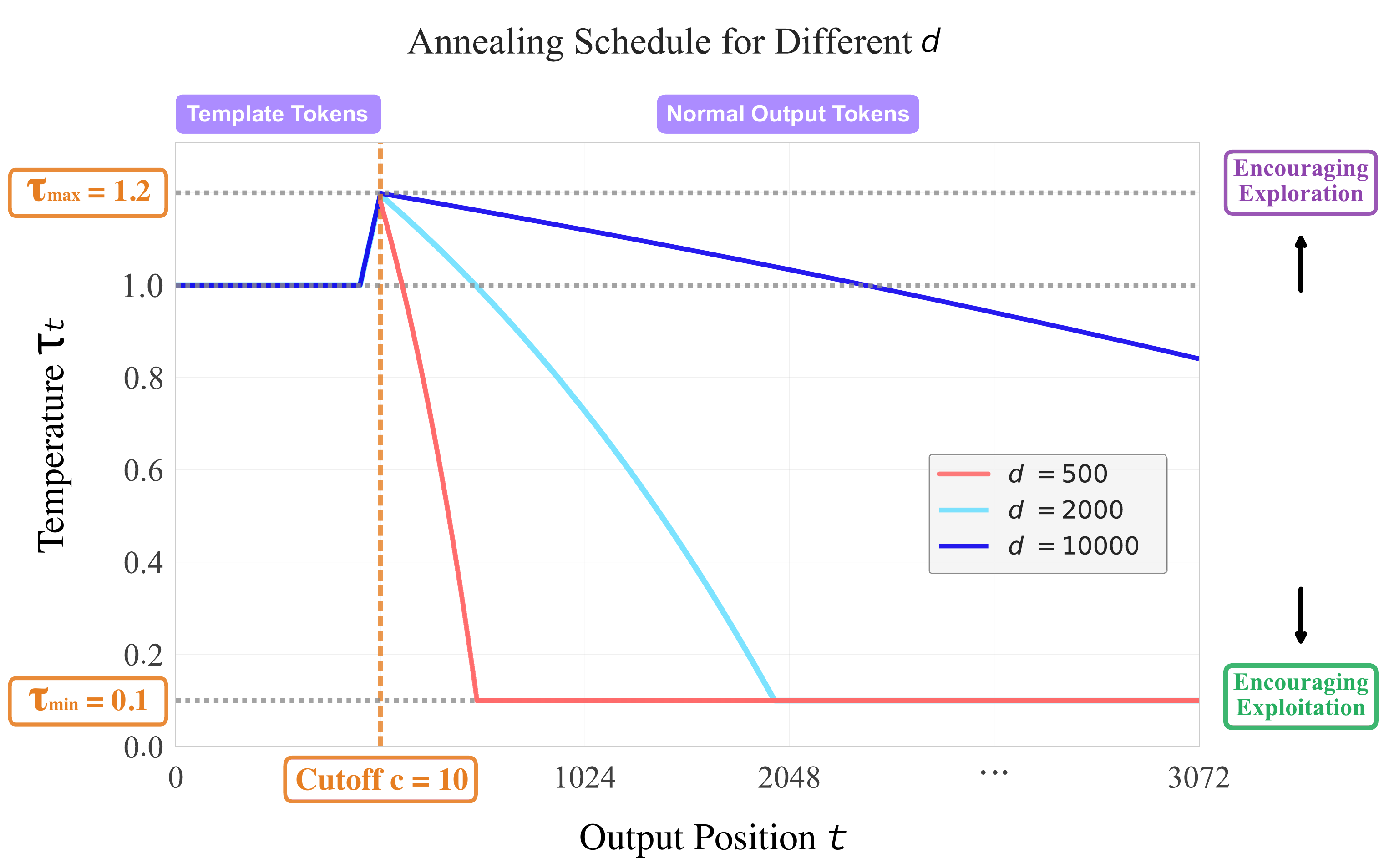}
    \vspace{-0.25in}
    \caption{
     The annealing schedule with different decay rates $d$. A larger $d$ slows the cooling, front-loading exploration over more tokens. We set $c=10, \tau_{\mathrm{max}}=1.2, \tau_{\mathrm{min}}=0.1$ for illustration.
    }
    \label{fig: annealing_schedule_demo}
\vspace{-0.2in}
\end{wrapfigure}
As illustrated in \cref{fig: annealing_schedule_demo}, the decay rate $d$ controls how long the policy remains in a high-exploration state. A larger $d$ front-loads exploration across more initial tokens, while a smaller $d$ transitions to exploitation more quickly. 
In practice, we let $\tau_t = 1.0$ for $t<c$, where $c$ is a pre-determined initial position for the sake of model-specific or prompt-specific template tokens injected in the training process. 
During RLVR, language models tend to generate some template tokens such as ``\textit{let's verify step by steps}'' or repeat the question. 
We fix the temperature at $\tau = 1.0$ in this part to avoid interfering with the generation process.

\paragraph{Global-Step-Aware Decay Rate.}
As training progresses and response lengths increase,\footnote{We illustrate increased length for \alg in \cref{app: increased_length}. } the decay rate $d$ should be adjusted in accordance with the training step.
Otherwise, an excessive number of tokens may be generated under extremely low temperatures, which degrades response quality and leads to undesirable behaviors such as repetition~\citep{guo2025deepseek}, off-topic drift~\citep{spataru2024know}, and unnecessary verbosity~\citep{Holtzman2020The}. 
In particular, we adopt the following \emph{global-step-aware decay rate}:
$d_s = \min(d_0 + 5s, 40000).$

\paragraph{Ensuring Stability with Truncated  Importance Sampling.}
With aggressive annealing schedules (e.g., very small $\tau_\mathrm{min}$ and $d$), sampling low-probability, long-tail tokens can cause the annealed policy to deviate significantly from the one being optimized. This creates an off-policy discrepancy that risks training instability.  To mitigate this, we employ \emph{truncated importance sampling} (TIS)~\citep{heckman1998matching, hilton2022batch, yao2025offpolicy} to correct the objective, ensuring stable optimization even under highly exploratory schedules (see \Cref{sec:off-policy} for details).

Overall, this annealed decoding strategy offers a compelling combination of effectiveness and efficiency. As a plug-and-play modification to standard temperature sampling, it incurs negligible computational overhead and is fully compatible with existing RLVR pipelines and diverse policy optimization algorithms like DAPO, GRPO, etc.

\section{Experiments}\label{sec:experiments}

\subsection{Experimental Setup}
\label{sec: experiment_setup}
\paragraph{Models, Data, and Training Frameworks.}
To ensure a rigorous and controlled comparison, we follow the Minimal-RL recipe~\citep{xiong2025minimalist},\footnote{More training details and hyperparameter setups are illustrated in \cref{app: minimal_rl_training_details}. } training all models on the Numina-Math dataset~\citep{numina_math_7b}, which contains 860k math prompts. To assess the generality of our method, we experiment with both Qwen-2.5-Math-1.5B~\citep{yang2024qwen2} and Llama-3.2-1B-Instruct~\citep{dubey2024llama}.\footnote{We also experimented with the Llama-3.2-1B base model. However, consistent with \citet{wang2025octothinker}, we found that applying RL to base models without intermediate domain-specific fine-tuning yields limited gains across all methods. We defer a deeper investigation to future work.} We also include the larger Qwen-2.5-Math-7B model to evaluate how our approach scales. While our primary experiments are conducted within the DAPO framework~\citep{yu2025dapo}, we demonstrate broader applicability by additionally integrating \alg with GRPO~\citep{shao2024deepseekmath} and EntropyMech~\citep{cui2025entropy}.

\paragraph{Baselines and Controlled Comparison.}
We evaluate \alg against fixed-temperature sampling, a standard and strong baseline, using temperatures $\tau \in \{0.6, 1.0, 1.2\}$ as recommended by prior work~\citep{renze-2024-effect, guo2025deepseek, hou2025t1}. For a fair comparison focused specifically on the sampling strategy, we disable two orthogonal techniques for all methods: (1) dynamic data sampling~\citep{yu2025dapo}, to maintain a consistent training set for all runs, and (2) rollout length penalties, to avoid confounding the reward signal with length-based biases.

\paragraph{Hyperparameters.}
Unless otherwise stated, we use a default configuration of $\tau_{\max}=1.2$ and $d_0=25$ for \alg. 
For $\tau_{\min}$, we observed optimal values varied by model capability.
For the 1B and 1.5B models, we set $\tau_{\min}=0.1$. 
For the more capable 7B model, we found that an overly low temperature could lead it to generate plausible but incorrect solutions; we therefore used a higher value of $\tau_{\min}=0.8$ to mitigate this effect. 
All hyperparameters are tuned based on a prior study over held-out datasets. 

\subsection{\alg Improves RLVR Training}

\paragraph{\alg Improves RL Exploration and Training Efficiency.}
\begin{figure}[t!]
\centering
\includegraphics[width=\linewidth]{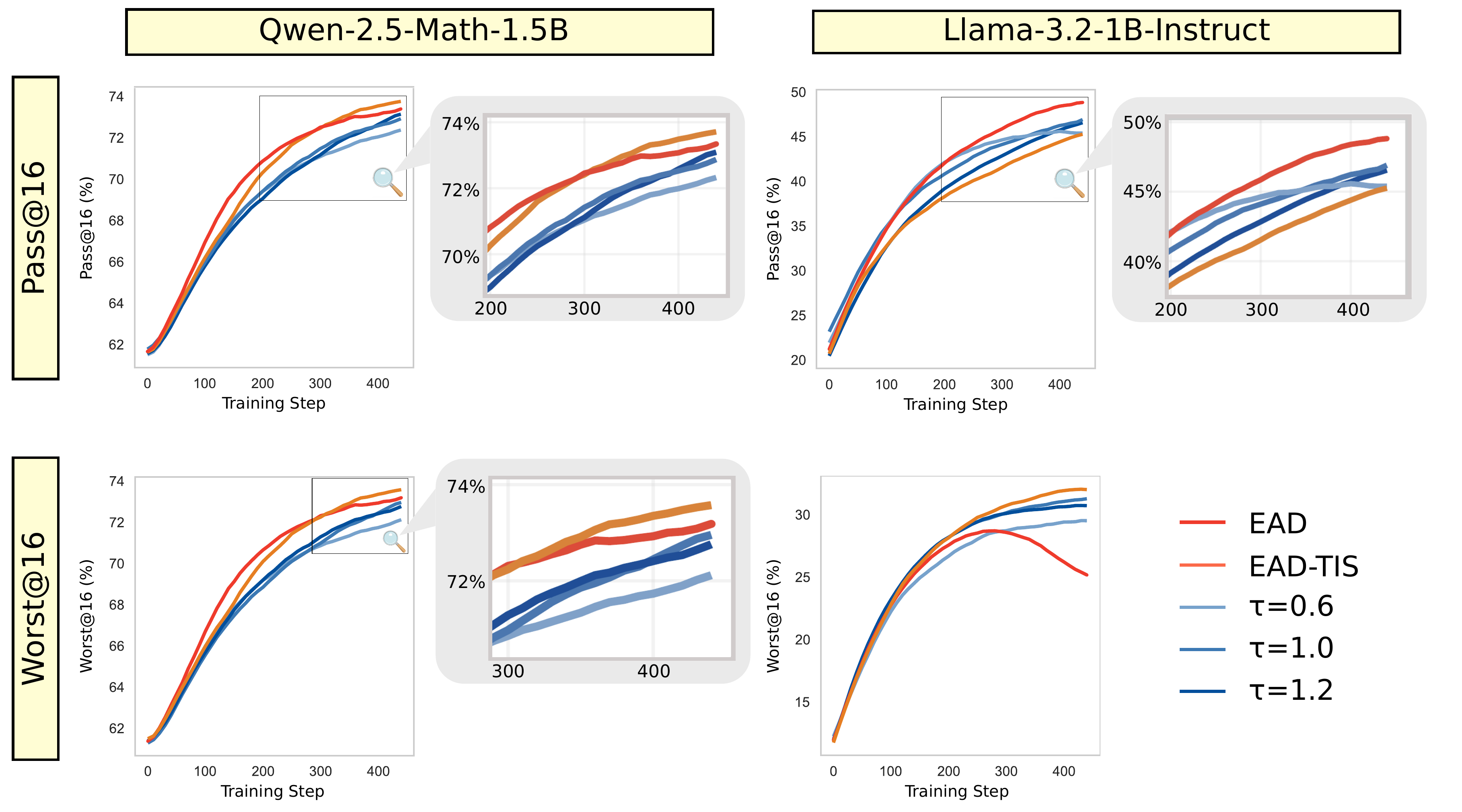}
\caption{
Pass@16 and Worst@16 performance evaluation in RL training. 
While \alg improves exploration of high-quality samples (even the worst outperform temperature sampling), the gain diminishes over time; importance sampling can supplement to correct bias and sustain training.
}
\label{fig: main_bench_best_worst_at_16}
\end{figure}
\begin{wrapfigure}{r}{0.45\textwidth}
\vspace{-0.15in}
    \centering
    \includegraphics[width=\linewidth]{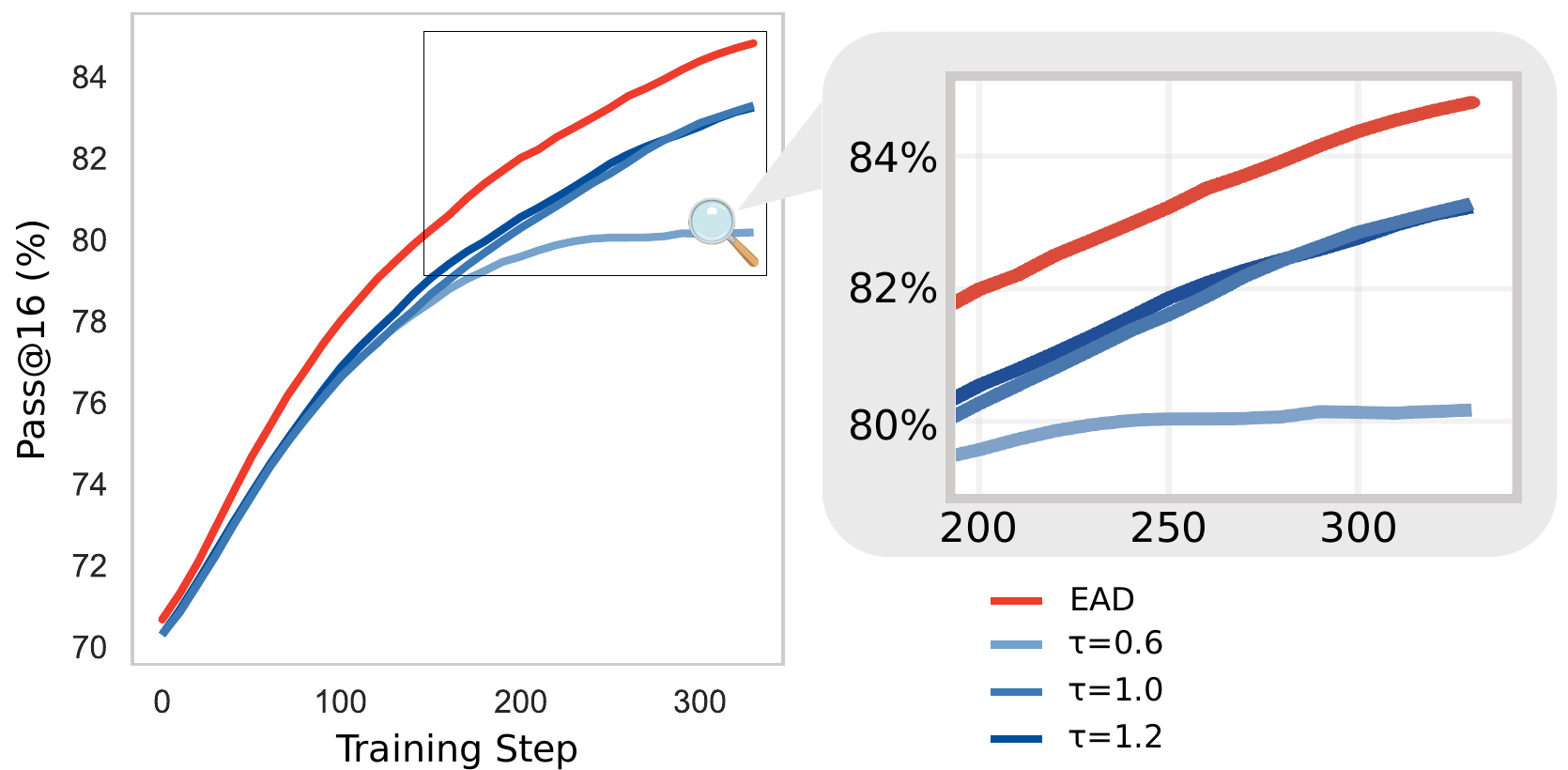}
    \vspace{-0.2in}
    \caption{
    Pass@16 performance on Qwen-2.5-Math-7B. 
    \alg enables better exploration than fixed-temperature sampling, yielding sustained gains in Pass@16 throughout training.
    }
    \label{fig: best_at_16_qwen2.5_7b}
\end{wrapfigure}
As shown in \cref{fig: main_bench_best_worst_at_16}, \alg significantly improves training efficiency. 
For Pass@16 accuracy, 
\alg (w/o TIS) consistently outperforms the baselines on the Llama and Qwen models (\alg (w/ TIS) also outperforms on the Llama model), demonstrating more effective exploration. 
Under the stricter Worst@16 metric, the inclusion of TIS becomes essential for maintaining stable performance gains, highlighting its importance for correcting the off-policy training dynamic introduced by \alg. 
Through bootstrapping evaluation as in~\citet{hochlehnert2025sober}, the standard deviation of both Pass@16 performance and worst@16 are way below 0.01 and thus all comparisons here are significant.

To verify that our method generalizes, we evaluated it on the larger Qwen-2.5-Math-7B model. The results, presented in \cref{fig: best_at_16_qwen2.5_7b}, confirm that the performance gains from \alg remain significant. This demonstrates that our approach is effective not only on smaller models but also scales successfully.

\begin{figure}[t]
    \centering
    \vspace{-0.05in}
    \includegraphics[width=.9\linewidth]{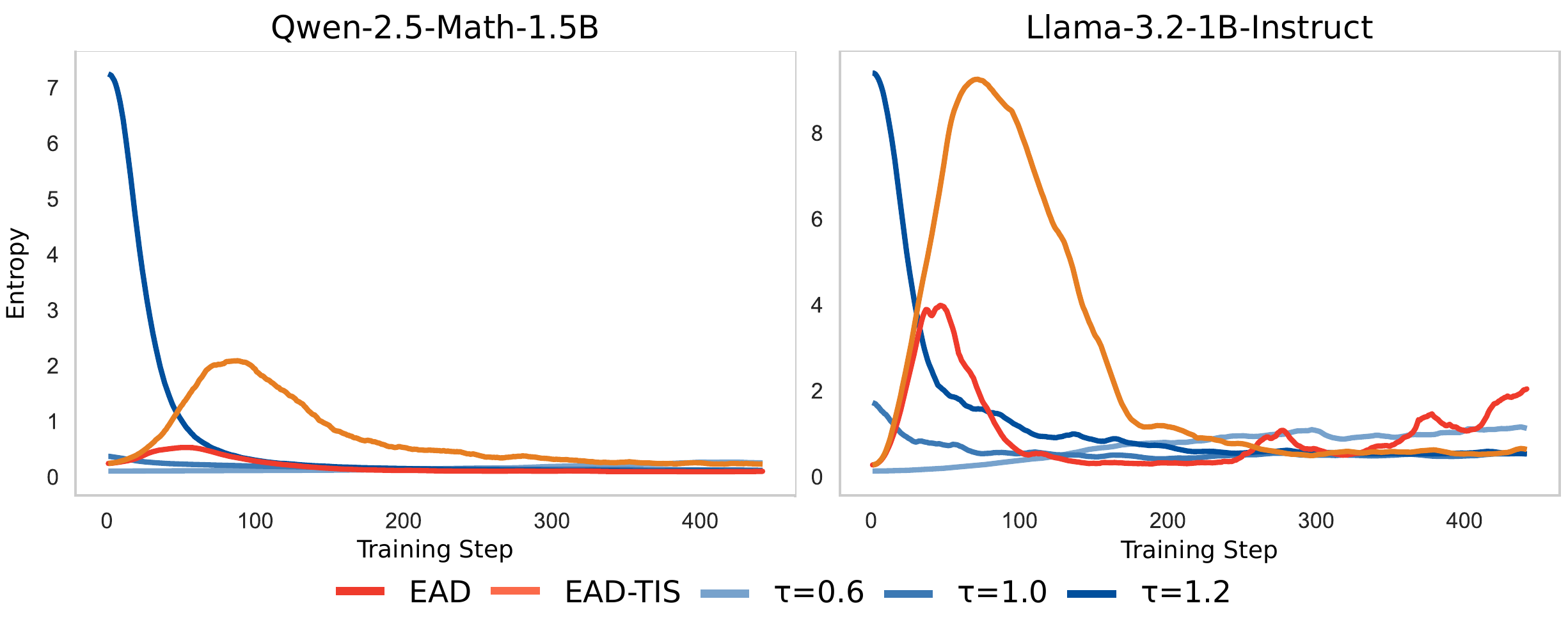}
    \vspace{-0.1in}
    \caption{Entropy Dynamics in RL Training. Under commonly-used temperature sampling, trained with RL algorithm would make entropy decrease, sharply shrinking the exploration space for RL from beginning. While EAD could help RL algorithm to escape local minimum and do exploration when needed in the middle of RL training. 
    }
\label{fig: main_bench_entropy}
\vspace{-0.25in}
\end{figure}

\paragraph{\alg Mitigates Entropy Collapse.}
One major problem in RLVR training is entropy collapse~\citep{cui2025entropy}, which causes the exploration space to shrink and constrains improvement during the ``plateau stage"~\citep{deng2025trial}.
We plot the entropy dynamic in \cref{fig: main_bench_entropy}, where we can see that the entropy dynamic for \alg-empowered methods is not monotonically decreasing from the beginning. 
Instead, it tries to gradually transition out from local optimum~\citep{kirkpatrick1983optimization, bertsimas1993simulated} in a natural, continuous way without any external intervention, such as introducing tree search in rollout sampling~\citep{li2025treepo}.
\begin{wrapfigure}{r}{0.6\textwidth}
    \centering
    \includegraphics[width=\linewidth]{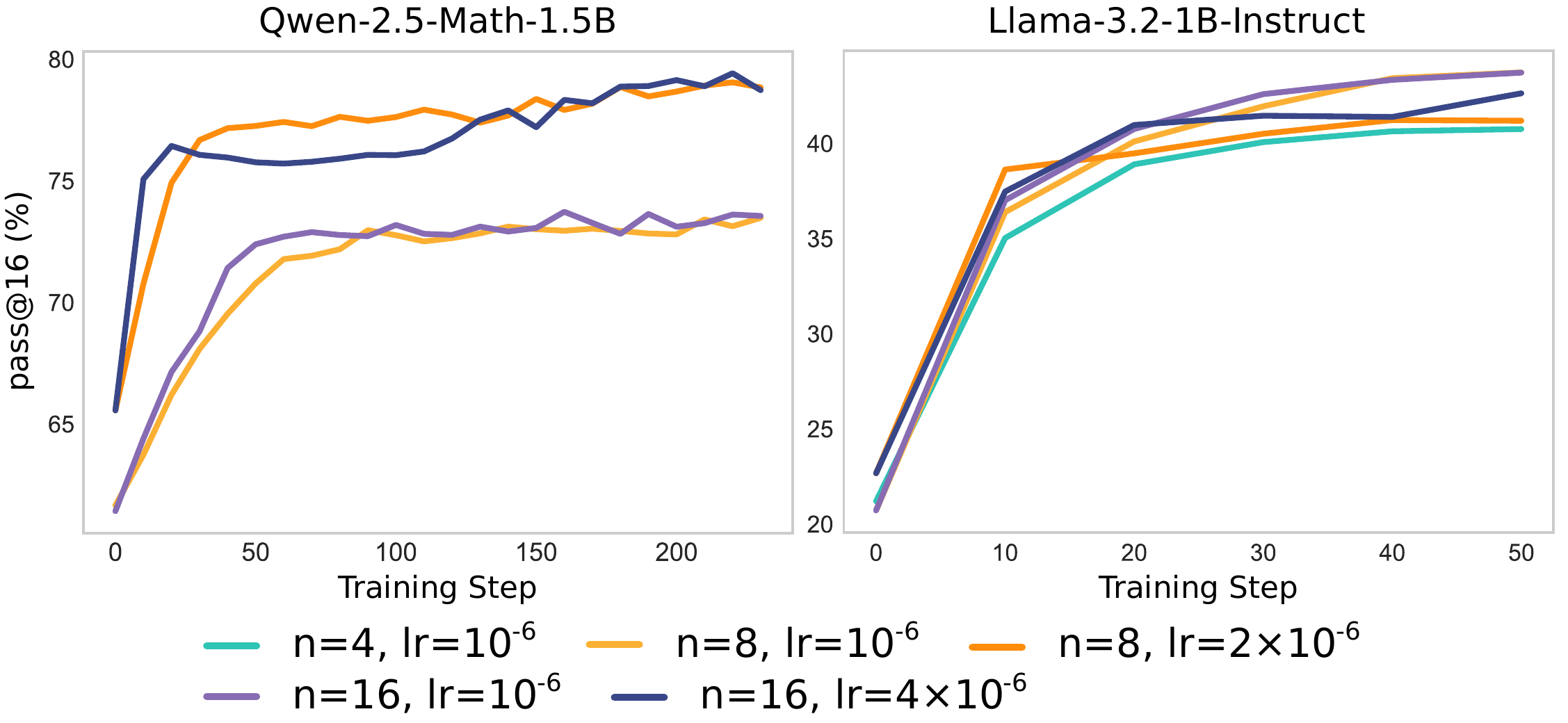}
    \vspace{-0.15in}
    \caption{\alg would bring further performance improvement via increased numbers of rollouts, but the commonly used $4$ or $8$ is already good enough. }
    \label{fig: main_bench_rollout_scaling}
\vspace{-0.4in}
\end{wrapfigure}
\begin{wrapfigure}{r}{0.6\textwidth}
    \vspace{-0.4in}
    \centering
    \includegraphics[width=\linewidth]{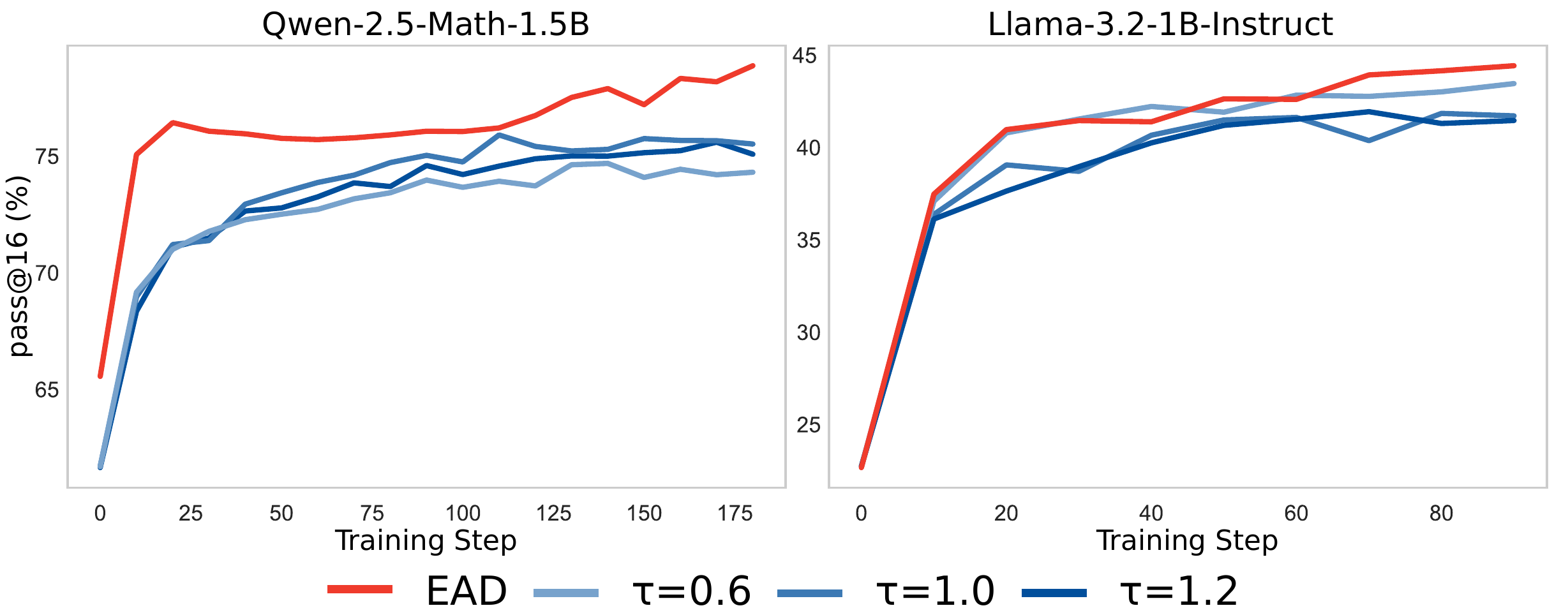}
    \vspace{-0.25in}
    \caption{When scaling out the rollout number to 16, the relative advantages of our methods diminished; however, it still outperforms traditional same-temperature sampling.}
    \label{fig: rollout_modelwise_compare}
    \vspace{-0.3in}
\end{wrapfigure}

\paragraph{Sample efficiency of \alg.}
Increasing the number of rollouts is a common but computationally expensive strategy to enhance exploration~\citep{hou2025t1}. 
We test the sample efficiency of \alg by varying the number of rollouts, adjusting the learning rate accordingly as suggested by prior work~\citep{chen2025pass}. 
As shown in \cref{fig: main_bench_rollout_scaling}, while more rollouts can further improve performance, \alg achieves strong results with just 4 or 8 rollouts. 
For instance, the optimal configurations are $n=8$ with a learning rate of $10^{-6}$ for Llama-3.2-1B-Instruct and $2\times10^{-6}$ for Qwen-2.5-Math-1.5B. 
This highlights the sample efficiency of our approach, offering a way to reduce the computational cost of the rollout phase.

To assess whether \alg's advantage persists with extensive exploration, we compare it against baselines using a larger set of 16 rollouts. 
\cref{fig: rollout_modelwise_compare} shows that although the relative performance gain diminishes, \alg still outperforms fixed-temperature baselines by a clear margin.

\subsection{\alg Improves Inference-Time Scaling}
\begin{wrapfigure}{r}{0.6\textwidth}
\vspace{-0.17in}
    \centering
    \includegraphics[width=\linewidth]{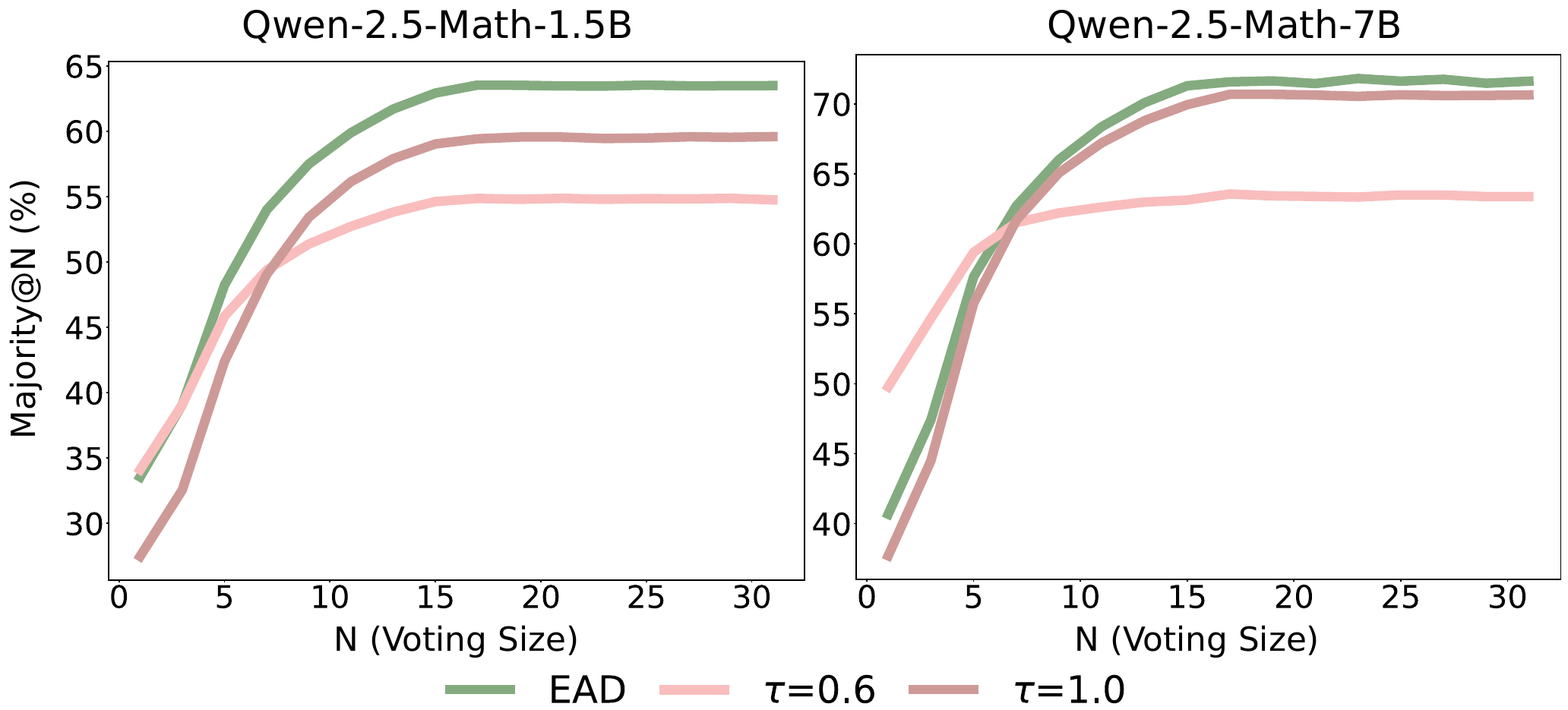}
    \vspace{-0.25in}
    \caption{Inference-Time Scaling Evaluation for Different Decoding Methods using off-the-shelf Qwen2.5 models. We could see that \alg improves traditional temperature sampling. We set $\tau_{\text{max}}=1.2, \tau_{\text{min}}=0.1, d=25$ for \alg. }
    \label{fig: inference_time_scaling}
\end{wrapfigure}
To understand whether the success of \alg in RL training is driven by its ability to generate high-quality samples, we conduct an evaluation at inference time. 
This experiment is designed to isolate the sampling strategy's effectiveness from the dynamics of RL optimization~\citep{berseth2025exploration}. 
Using off-the-shelf Qwen-2.5 models without any fine-tuning, we compare \alg against fixed-temperature sampling. 
We use majority voting ($\text{Majority}@N$) to measure how performance scales with the number of samples $N$~\citep{wang2023selfconsistency, snell2024scaling}. 
As shown in \cref{fig: inference_time_scaling}, \alg consistently improves over the baseline for most values of $N$. 
This result confirms that \alg's advantage stems from its inherent capacity to discover higher-quality solutions, even without any training.

\subsection{\alg is Compatible with Various RL Algorithms}
To demonstrate that \alg is a general, plug-and-play exploration strategy, we evaluate its performance when integrated into two other prominent RL algorithms: GRPO~\citep{shao2024deepseekmath} and EntropyMech~\citep{cui2025entropy}. 
These algorithms provide diverse testbeds. 
GRPO is more conservative, constraining policy updates with a KL divergence penalty and stricter clipping mechanism that can limit exploration~\citep{yu2025dapo}, while EntropyMech uses a specialized token-clipping mechanism to mitigate entropy collapse. 
\begin{figure}[t]
    \centering
    \includegraphics[width=.75\linewidth]{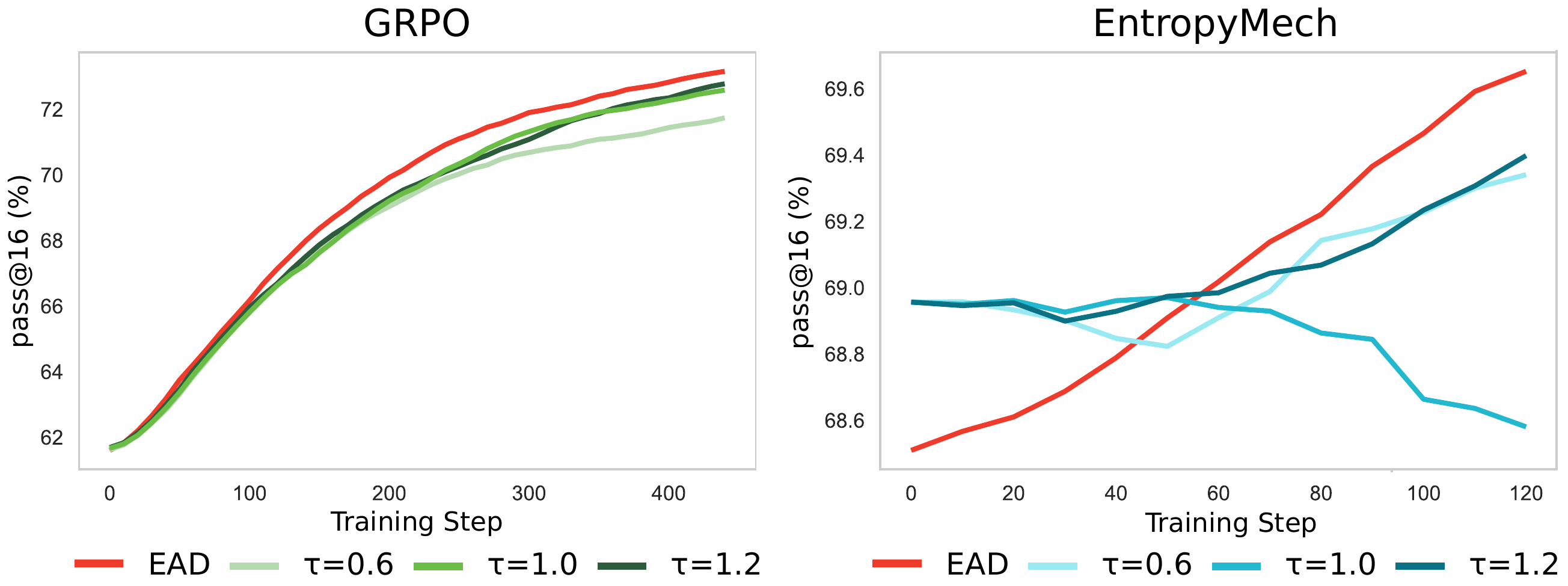}
    \vspace{-0.1in}
    \caption{\alg is compatible with various RL algorithms and can significantly improve the model performance over time. } 
    \label{fig: algorithm_wide_comparison}
    \vspace{-0.2in}
\end{figure}
As shown in \cref{fig: algorithm_wide_comparison}, \alg consistently outperforms fixed-temperature sampling in both frameworks. 
These results confirm the broad applicability of our method as an improved exploration strategy across different RL algorithms.

\section{Related Work}
\label{sec: related_work}
\paragraph{Reinforcement Learning with Verifiable Rewards.}  Recent large-scale reasoning models such as OpenAI o1~\citep{openai2024learning}, DeepSeek‑R1~\citep{guo2025deepseek} have demonstrated that reinforcement-learning-based post‑training can substantially enhance LLM reasoning. Motivated by this reinforcement learning with verifiable rewards (RLVR) (e.g.~\citet{shao2024deepseekmath,guo2025deepseek,lambert2024tulu,yang2025qwen3,hu2025open,yu2025dapo,guan2025rstar,zeng2025simplerl} among many other works) has become a major approach for post‑training LLMs to improve reasoning. A broad literature studies how to make RLVR training effective and efficient at scale, including novel reinforcement learning algorithms and objectives~\citep{yu2025dapo,liu2025understanding,yue2025vapo,zheng2025group}, verifier architecture and reward deigns~\citep{zuo2025ttrl,zhao2025learning,agarwal2025unreasonable,prabhudesai2025maximizing}, and mechanisms that manage exploration diversity and entropy~\citep{cheng2025reasoning,chen2025pass,cui2025entropy,wang2025beyond}, or takes a critical view on the current evaluation~\citep{yue2025does,zhao2025echo,hochlehnert2025sober}. Despite steady progress, a fundamental challenge is to balance exploration and exploitation along long reasoning trajectories without brittle heuristics. We adopt a simple yet effective annealed sampling schedule that front‑loads exploration and cools later steps during rollout to encourage exploration while keeping training stable.

\paragraph{Exploration Control in RLVR.} A line of works that is close to our work studies how to control exploration and sampling while doing RLVR~\citep{hou2025t1, cheng2025reasoning,chen2025pass,cui2025entropy,wang2025beyond,xiong2025minimalist,deng2025trial,xu2025not,zheng2025act,dou2025improving, li2025treepo}. In particular, \citet{cheng2025reasoning} propose per-token entropy to focus exploration at branching tokens in sampling; \citet{chen2025pass} transform per-prompt rewards to optimize pass@$k$, guiding exploration across samples; \citet{cui2025entropy} control high-covariance tokens to prevent entropy collapse and sustain exploration; \citet{wang2025beyond} update only high-entropy tokens, concentrating exploration where decisions split. Different from prior work, this paper presents the first systematic analysis of sampling temperature and introduce a purely sampling-level annealed schedule that encourages exploration and then progressively stabilize answers, thereby enabling discovery of new solutions while yielding more stable training.

\paragraph{Simulated Annealing.} 
Simulated Annealing (SA) is a probabilistic optimization technique inspired by annealing in metallurgy, designed to find the global optimum in a large search space \citep{kirkpatrick1983optimization, bertsimas1993simulated}. The core principle involves a temperature parameter that controls the probability of accepting suboptimal states. Initially, a high temperature allows the search to escape local minima by exploring broadly (exploration). As the temperature gradually decreases, the algorithm increasingly favors better states, converging towards a high-quality solution (exploitation).
This ``coarse-to-fine" search dynamic, where high temperatures establish a solution's general structure and low temperatures refine its details, strongly parallels the generative process of LLMs \citep{yang2025alignment}. SA has been adapted in various machine learning contexts to manage the exploration-exploitation trade-off, including recent applications in graph optimization~\citep{liu2021simulated}, text editing \citep{zhang2024edt}, non-autoregressive generation \citep{israel2025enabling}, and efficient Best-of-N sampling \citep{manvi2024adaptive}. However, these prior applications invariably apply a single, uniform temperature across all positions in a generated sequence. This approach fails to account for the heterogeneous roles of tokens at different positions \citep{wang2025beyond}. Our work departs from this convention. To the best of our knowledge, we are the first to introduce an \textbf{intra-sequence annealed temperature} schedule, where the temperature varies dynamically within the generation of a single sequence. This novel approach allows for more nuanced control over exploration and leads to significant performance gains in RLVR.

\section{Discussion}
Our work addresses a central challenge in RLVR: achieve an effective balance between exploration and exploitation.
We introduce \algname (\alg), a simple yet powerful sampling strategy that avoids heavy computation and intricate heuristics.
Specifically, \alg employs a temperature-annealing schedule that begins with a high sampling temperature and gradually cools, enabling LLMs to explore broadly at the beginning of generation and converge toward precise, high-quality completions throughout the decoding process.
As RLVR often relies on multiple rollouts to estimate rewards, this annealing schedule effectively improves sampling diversity while controlling variance, making it well suited for RL training.
At the same time, \alg can also be applied directly at test time to enhance inference efficiency and scaling, improving the quality of single- or multi-sample decoding without additional computation cost.

Despite the encouraging results, our study has several limitations that suggest directions for future work. 
First, the scaling behavior of our method is not fully explored because of limited computational resources. 
We adopt the current settings with reference to~\citep{xiong2025minimalist,shao2025spurious,wang2025reinforcement}, and argue that the efficacy of the method is still convincing, as we evaluate it across diverse model structures (LLaMA and Qwen) and multiple model sizes. 
A systematic scaling study remains an important next step.
Second, while \alg is designed as a complementary component, a comprehensive study combining it with other advanced exploration-promoting RLVR algorithms (see \cref{sec: related_work}) remains a promising direction for future work.
Third, our current experiments adopt a uniform temperature schedule for all prompts. 
Although an adaptive schedule tailored to individual prompts could potentially enhance performance, developing such a mechanism is nontrivial. 
In RLVR, training is iterative, so any prior information about prompt distributions may shift during optimization, and collecting extra statistics (e.g., token-wise entropy quantile~\citep{wang2025beyond}, or probability-advantage covariance~\citep{cui2025entropy}) to track these changes for every prompt would add computational overhead and system complexity~\citep{li2025treepo, liu2025uniform}.
For these reasons, we focus on the vanilla schedule to test the core efficacy of our method, leaving adaptive scheduling for future investigation.

In summary, \alg provides a simple yet general way to couple exploration with the inherent progression of language generation. 
By reducing algorithmic overhead while improving trajectory quality, it opens new avenues for both efficient inference and effective reinforcement fine-tuning.

\clearpage
\bibliography{arxiv}

\begin{thebibliography}{74}
\providecommand{\natexlab}[1]{#1}
\providecommand{\url}[1]{\texttt{#1}}
\expandafter\ifx\csname urlstyle\endcsname\relax
  \providecommand{\doi}[1]{doi: #1}\else
  \providecommand{\doi}{doi: \begingroup \urlstyle{rm}\Url}\fi

\bibitem[Ackley et~al.(1985)Ackley, Hinton, and Sejnowski]{ACKLEY1985147}
David~H. Ackley, Geoffrey~E. Hinton, and Terrence~J. Sejnowski.
\newblock A learning algorithm for boltzmann machines.
\newblock \emph{Cognitive Science}, 9\penalty0 (1):\penalty0 147--169, 1985.
\newblock ISSN 0364-0213.
\newblock \doi{https://doi.org/10.1016/S0364-0213(85)80012-4}.
\newblock URL
  \url{https://www.sciencedirect.com/science/article/pii/S0364021385800124}.

\bibitem[Agarwal et~al.(2025)Agarwal, Zhang, Yuan, Han, and
  Peng]{agarwal2025unreasonable}
Shivam Agarwal, Zimin Zhang, Lifan Yuan, Jiawei Han, and Hao Peng.
\newblock The unreasonable effectiveness of entropy minimization in llm
  reasoning.
\newblock \emph{arXiv preprint arXiv:2505.15134}, 2025.

\bibitem[Beeching et~al.(2024)Beeching, Huang, Jiang, Li, Lipkin, Qina, Rasul,
  Shen, Soletskyi, and Tunstall]{numina_math_7b}
Edward Beeching, Shengyi~Costa Huang, Albert Jiang, Jia Li, Benjamin Lipkin,
  Zihan Qina, Kashif Rasul, Ziju Shen, Roman Soletskyi, and Lewis Tunstall.
\newblock Numinamath 7b cot.
\newblock \url{https://huggingface.co/AI-MO/NuminaMath-7B-CoT}, 2024.

\bibitem[Berseth(2025)]{berseth2025exploration}
Glen Berseth.
\newblock Is exploration or optimization the problem for deep reinforcement
  learning?
\newblock \emph{arXiv preprint arXiv:2508.01329}, 2025.

\bibitem[Bertsimas \& Tsitsiklis(1993)Bertsimas and
  Tsitsiklis]{bertsimas1993simulated}
Dimitris Bertsimas and John Tsitsiklis.
\newblock Simulated annealing.
\newblock \emph{Statistical science}, 8\penalty0 (1):\penalty0 10--15, 1993.

\bibitem[Chen et~al.(2025)Chen, Qin, Wu, Ling, Ye, Zhao, and Shi]{chen2025pass}
Zhipeng Chen, Xiaobo Qin, Youbin Wu, Yue Ling, Qinghao Ye, Wayne~Xin Zhao, and
  Guang Shi.
\newblock Pass@ k training for adaptively balancing exploration and
  exploitation of large reasoning models.
\newblock \emph{arXiv preprint arXiv:2508.10751}, 2025.

\bibitem[Cheng et~al.(2025)Cheng, Huang, Zhu, Dai, Zhao, Zhang, and
  Wei]{cheng2025reasoning}
Daixuan Cheng, Shaohan Huang, Xuekai Zhu, Bo~Dai, Wayne~Xin Zhao, Zhenliang
  Zhang, and Furu Wei.
\newblock Reasoning with exploration: An entropy perspective.
\newblock \emph{arXiv preprint arXiv:2506.14758}, 2025.

\bibitem[Cui et~al.(2025)Cui, Zhang, Chen, Yuan, Wang, Zuo, Li, Fan, Chen,
  Chen, et~al.]{cui2025entropy}
Ganqu Cui, Yuchen Zhang, Jiacheng Chen, Lifan Yuan, Zhi Wang, Yuxin Zuo,
  Haozhan Li, Yuchen Fan, Huayu Chen, Weize Chen, et~al.
\newblock The entropy mechanism of reinforcement learning for reasoning
  language models.
\newblock \emph{arXiv preprint arXiv:2505.22617}, 2025.

\bibitem[Degris et~al.(2012)Degris, White, and Sutton]{degris2012off}
Thomas Degris, Martha White, and Richard~S Sutton.
\newblock Off-policy actor-critic.
\newblock \emph{arXiv preprint arXiv:1205.4839}, 2012.

\bibitem[Deng et~al.(2025)Deng, Chen, Chen, Cheng, Bai, Zhang, Min, Gao, Zhao,
  and Wen]{deng2025trial}
Jia Deng, Jie Chen, Zhipeng Chen, Daixuan Cheng, Fei Bai, Beichen Zhang,
  Yinqian Min, Yanzipeng Gao, Wayne~Xin Zhao, and Ji-Rong Wen.
\newblock From trial-and-error to improvement: A systematic analysis of llm
  exploration mechanisms in rlvr.
\newblock \emph{arXiv preprint arXiv:2508.07534}, 2025.

\bibitem[Dou et~al.(2025)Dou, Wu, Xu, Zheng, Gui, Zhang, and
  Huang]{dou2025improving}
Shihan Dou, Muling Wu, Jingwen Xu, Rui Zheng, Tao Gui, Qi~Zhang, and Xuanjing
  Huang.
\newblock Improving rl exploration for llm reasoning through retrospective
  replay.
\newblock \emph{arXiv preprint arXiv:2504.14363}, 2025.

\bibitem[Dubey et~al.(2024)Dubey, Jauhri, Pandey, Kadian, Al-Dahle, Letman,
  Mathur, Schelten, Yang, Fan, et~al.]{dubey2024llama}
Abhimanyu Dubey, Abhinav Jauhri, Abhinav Pandey, Abhishek Kadian, Ahmad
  Al-Dahle, Aiesha Letman, Akhil Mathur, Alan Schelten, Amy Yang, Angela Fan,
  et~al.
\newblock The llama 3 herd of models.
\newblock \emph{arXiv e-prints}, pp.\  arXiv--2407, 2024.

\bibitem[Fu et~al.(2025)Fu, Wang, Tian, and Zhao]{fu2025deep}
Yichao Fu, Xuewei Wang, Yuandong Tian, and Jiawei Zhao.
\newblock Deep think with confidence.
\newblock \emph{arXiv preprint arXiv:2508.15260}, 2025.

\bibitem[Gao et~al.(2023)Gao, Schulman, and Hilton]{gao2023scaling}
Leo Gao, John Schulman, and Jacob Hilton.
\newblock Scaling laws for reward model overoptimization.
\newblock In \emph{International Conference on Machine Learning}, pp.\
  10835--10866. PMLR, 2023.

\bibitem[Grattafiori et~al.(2024)Grattafiori, Dubey, Jauhri, Pandey, Kadian,
  Al-Dahle, Letman, Mathur, Schelten, Vaughan, et~al.]{grattafiori2024llama}
Aaron Grattafiori, Abhimanyu Dubey, Abhinav Jauhri, Abhinav Pandey, Abhishek
  Kadian, Ahmad Al-Dahle, Aiesha Letman, Akhil Mathur, Alan Schelten, Alex
  Vaughan, et~al.
\newblock The llama 3 herd of models.
\newblock \emph{arXiv preprint arXiv:2407.21783}, 2024.

\bibitem[Guan et~al.(2025)Guan, Zhang, Liu, Shang, Sun, Zhu, Yang, and
  Yang]{guan2025rstar}
Xinyu Guan, Li~Lyna Zhang, Yifei Liu, Ning Shang, Youran Sun, Yi~Zhu, Fan Yang,
  and Mao Yang.
\newblock rstar-math: Small llms can master math reasoning with self-evolved
  deep thinking.
\newblock \emph{arXiv preprint arXiv:2501.04519}, 2025.

\bibitem[Guo et~al.(2025)Guo, Yang, Zhang, Song, Zhang, Xu, Zhu, Ma, Wang, Bi,
  et~al.]{guo2025deepseek}
Daya Guo, Dejian Yang, Haowei Zhang, Junxiao Song, Ruoyu Zhang, Runxin Xu,
  Qihao Zhu, Shirong Ma, Peiyi Wang, Xiao Bi, et~al.
\newblock Deepseek-r1: Incentivizing reasoning capability in llms via
  reinforcement learning.
\newblock \emph{arXiv preprint arXiv:2501.12948}, 2025.

\bibitem[Haarnoja et~al.(2018)Haarnoja, Zhou, Abbeel, and
  Levine]{haarnoja2018soft}
Tuomas Haarnoja, Aurick Zhou, Pieter Abbeel, and Sergey Levine.
\newblock Soft actor-critic: Off-policy maximum entropy deep reinforcement
  learning with a stochastic actor.
\newblock In \emph{International conference on machine learning}, pp.\
  1861--1870. Pmlr, 2018.

\bibitem[Heckman et~al.(1998)Heckman, Ichimura, and Todd]{heckman1998matching}
James~J Heckman, Hidehiko Ichimura, and Petra Todd.
\newblock Matching as an econometric evaluation estimator.
\newblock \emph{The review of economic studies}, 65\penalty0 (2):\penalty0
  261--294, 1998.

\bibitem[Hendrycks et~al.(2021)Hendrycks, Burns, Basart, Zou, Mazeika, Song,
  and Steinhardt]{hendrycks2021measuring}
Dan Hendrycks, Collin Burns, Steven Basart, Andy Zou, Mantas Mazeika, Dawn
  Song, and Jacob Steinhardt.
\newblock Measuring massive multitask language understanding.
\newblock In \emph{International Conference on Learning Representations}, 2021.
\newblock URL \url{https://openreview.net/forum?id=d7KBjmI3GmQ}.

\bibitem[Hilton et~al.(2022)Hilton, Cobbe, and Schulman]{hilton2022batch}
Jacob Hilton, Karl Cobbe, and John Schulman.
\newblock Batch size-invariance for policy optimization.
\newblock In S.~Koyejo, S.~Mohamed, A.~Agarwal, D.~Belgrave, K.~Cho, and A.~Oh
  (eds.), \emph{Advances in Neural Information Processing Systems}, volume~35,
  pp.\  17086--17098. Curran Associates, Inc., 2022.
\newblock URL
  \url{https://proceedings.neurips.cc/paper_files/paper/2022/file/6ceb6c2150bbf46fd75528a6cd6be793-Paper-Conference.pdf}.

\bibitem[Hochlehnert et~al.(2025)Hochlehnert, Bhatnagar, Udandarao, Albanie,
  Prabhu, and Bethge]{hochlehnert2025sober}
Andreas Hochlehnert, Hardik Bhatnagar, Vishaal Udandarao, Samuel Albanie, Ameya
  Prabhu, and Matthias Bethge.
\newblock A sober look at progress in language model reasoning: Pitfalls and
  paths to reproducibility.
\newblock \emph{arXiv preprint arXiv:2504.07086}, 2025.

\bibitem[Holtzman et~al.(2020)Holtzman, Buys, Du, Forbes, and
  Choi]{Holtzman2020The}
Ari Holtzman, Jan Buys, Li~Du, Maxwell Forbes, and Yejin Choi.
\newblock The curious case of neural text degeneration.
\newblock In \emph{International Conference on Learning Representations}, 2020.
\newblock URL \url{https://openreview.net/forum?id=rygGQyrFvH}.

\bibitem[Hou et~al.(2025)Hou, Lv, Lu, Zhang, Li, Yao, Li, Tang, and
  Dong]{hou2025t1}
Zhenyu Hou, Xin Lv, Rui Lu, Jiajie Zhang, Yujiang Li, Zijun Yao, Juanzi Li, Jie
  Tang, and Yuxiao Dong.
\newblock T1: Advancing language model reasoning through reinforcement learning
  and inference scaling.
\newblock In \emph{Forty-second International Conference on Machine Learning},
  2025.

\bibitem[Hu et~al.(2025)Hu, Zhang, Han, Jiang, Zhang, and Shum]{hu2025open}
Jingcheng Hu, Yinmin Zhang, Qi~Han, Daxin Jiang, Xiangyu Zhang, and Heung-Yeung
  Shum.
\newblock Open-reasoner-zero: An open source approach to scaling up
  reinforcement learning on the base model.
\newblock \emph{arXiv preprint arXiv:2503.24290}, 2025.

\bibitem[Israel et~al.(2025)Israel, Grover, and Broeck]{israel2025enabling}
Daniel Israel, Aditya Grover, and Guy Van~den Broeck.
\newblock Enabling autoregressive models to fill in masked tokens.
\newblock \emph{arXiv preprint arXiv:2502.06901}, 2025.

\bibitem[Jaynes(1957)]{jaynes1957information}
Edwin~T Jaynes.
\newblock Information theory and statistical mechanics.
\newblock \emph{Physical review}, 106\penalty0 (4):\penalty0 620, 1957.

\bibitem[Kirkpatrick et~al.(1983)Kirkpatrick, Gelatt~Jr, and
  Vecchi]{kirkpatrick1983optimization}
Scott Kirkpatrick, C~Daniel Gelatt~Jr, and Mario~P Vecchi.
\newblock Optimization by simulated annealing.
\newblock \emph{science}, 220\penalty0 (4598):\penalty0 671--680, 1983.

\bibitem[Ladosz et~al.(2022)Ladosz, Weng, Kim, and Oh]{ladosz2022exploration}
Pawel Ladosz, Lilian Weng, Minwoo Kim, and Hyondong Oh.
\newblock Exploration in deep reinforcement learning: A survey.
\newblock \emph{Information Fusion}, 85:\penalty0 1--22, 2022.

\bibitem[Lambert et~al.(2024)Lambert, Morrison, Pyatkin, Huang, Ivison,
  Brahman, Miranda, Liu, Dziri, Lyu, et~al.]{lambert2024tulu}
Nathan Lambert, Jacob Morrison, Valentina Pyatkin, Shengyi Huang, Hamish
  Ivison, Faeze Brahman, Lester James~V Miranda, Alisa Liu, Nouha Dziri, Shane
  Lyu, et~al.
\newblock Tulu 3: Pushing frontiers in open language model post-training.
\newblock \emph{arXiv preprint arXiv:2411.15124}, 2024.

\bibitem[Li et~al.(2025)Li, Gu, Wen, Li, Xing, Guo, Zheng, Zhou, Qu, Zhou,
  et~al.]{li2025treepo}
Yizhi Li, Qingshui Gu, Zhoufutu Wen, Ziniu Li, Tianshun Xing, Shuyue Guo,
  Tianyu Zheng, Xin Zhou, Xingwei Qu, Wangchunshu Zhou, et~al.
\newblock Treepo: Bridging the gap of policy optimization and efficacy and
  inference efficiency with heuristic tree-based modeling.
\newblock \emph{arXiv preprint arXiv:2508.17445}, 2025.

\bibitem[Liao et~al.(2025)Liao, Chen, Rajaee, Xu, Herold, S{\o}gaard, de~Rijke,
  and Monz]{liao2025lost}
Baohao Liao, Xinyi Chen, Sara Rajaee, Yuhui Xu, Christian Herold, Anders
  S{\o}gaard, Maarten de~Rijke, and Christof Monz.
\newblock Lost at the beginning of reasoning.
\newblock \emph{arXiv preprint arXiv:2506.22058}, 2025.

\bibitem[Liu et~al.(2021)Liu, Li, Meng, Zhou, Zhong, Zhou, Mou, and
  Song]{liu2021simulated}
Xianggen Liu, Pengyong Li, Fandong Meng, Hao Zhou, Huasong Zhong, Jie Zhou,
  Lili Mou, and Sen Song.
\newblock Simulated annealing for optimization of graphs and sequences.
\newblock \emph{Neurocomputing}, 465:\penalty0 310--324, 2021.

\bibitem[Liu et~al.(2025{\natexlab{a}})Liu, Liu, Wen, Cai, Cui, He, and
  Zhang]{liu2025uniform}
Zheng Liu, Mengjie Liu, Siwei Wen, Mengzhang Cai, Bin Cui, Conghui He, and
  Wentao Zhang.
\newblock From uniform to heterogeneous: Tailoring policy optimization to every
  token's nature.
\newblock \emph{arXiv preprint arXiv:2509.16591}, 2025{\natexlab{a}}.

\bibitem[Liu et~al.(2025{\natexlab{b}})Liu, Chen, Li, Qi, Pang, Du, Lee, and
  Lin]{liu2025understanding}
Zichen Liu, Changyu Chen, Wenjun Li, Penghui Qi, Tianyu Pang, Chao Du, Wee~Sun
  Lee, and Min Lin.
\newblock Understanding r1-zero-like training: A critical perspective.
\newblock \emph{arXiv preprint arXiv:2503.20783}, 2025{\natexlab{b}}.

\bibitem[Manvi et~al.(2024)Manvi, Singh, and Ermon]{manvi2024adaptive}
Rohin Manvi, Anikait Singh, and Stefano Ermon.
\newblock Adaptive inference-time compute: Llms can predict if they can do
  better, even mid-generation.
\newblock \emph{arXiv preprint arXiv:2410.02725}, 2024.

\bibitem[{OpenAI}(2024)]{openai2024learning}
{OpenAI}.
\newblock Learning to reason with llms.
\newblock \url{https://openai.com/index/learning-to-reason-with-llms/}, 2024.
\newblock Accessed: 2025-05-01.

\bibitem[Perez et~al.(2023)Perez, Ringer, Lukosiute, Nguyen, Chen, Heiner,
  Pettit, Olsson, Kundu, Kadavath, et~al.]{perez2023discovering}
Ethan Perez, Sam Ringer, Kamile Lukosiute, Karina Nguyen, Edwin Chen, Scott
  Heiner, Craig Pettit, Catherine Olsson, Sandipan Kundu, Saurav Kadavath,
  et~al.
\newblock Discovering language model behaviors with model-written evaluations.
\newblock In \emph{Findings of the association for computational linguistics:
  ACL 2023}, pp.\  13387--13434, 2023.

\bibitem[Prabhudesai et~al.(2025)Prabhudesai, Chen, Ippoliti, Fragkiadaki, Liu,
  and Pathak]{prabhudesai2025maximizing}
Mihir Prabhudesai, Lili Chen, Alex Ippoliti, Katerina Fragkiadaki, Hao Liu, and
  Deepak Pathak.
\newblock Maximizing confidence alone improves reasoning.
\newblock \emph{arXiv preprint arXiv:2505.22660}, 2025.

\bibitem[Renze(2024)]{renze-2024-effect}
Matthew Renze.
\newblock The effect of sampling temperature on problem solving in large
  language models.
\newblock In Yaser Al-Onaizan, Mohit Bansal, and Yun-Nung Chen (eds.),
  \emph{Findings of the Association for Computational Linguistics: EMNLP 2024},
  pp.\  7346--7356, Miami, Florida, USA, November 2024. Association for
  Computational Linguistics.
\newblock \doi{10.18653/v1/2024.findings-emnlp.432}.
\newblock URL \url{https://aclanthology.org/2024.findings-emnlp.432/}.

\bibitem[Schulman et~al.(2017)Schulman, Wolski, Dhariwal, Radford, and
  Klimov]{schulman2017proximal}
John Schulman, Filip Wolski, Prafulla Dhariwal, Alec Radford, and Oleg Klimov.
\newblock Proximal policy optimization algorithms.
\newblock \emph{arXiv preprint arXiv:1707.06347}, 2017.

\bibitem[Shannon(1948)]{shannon1948mathematical}
Claude~E Shannon.
\newblock A mathematical theory of communication.
\newblock \emph{The Bell system technical journal}, 27\penalty0 (3):\penalty0
  379--423, 1948.

\bibitem[Shao et~al.(2025)Shao, Li, Xin, Geng, Wang, Oh, Du, Lambert, Min,
  Krishna, et~al.]{shao2025spurious}
Rulin Shao, Shuyue~Stella Li, Rui Xin, Scott Geng, Yiping Wang, Sewoong Oh,
  Simon~Shaolei Du, Nathan Lambert, Sewon Min, Ranjay Krishna, et~al.
\newblock Spurious rewards: Rethinking training signals in rlvr.
\newblock \emph{arXiv preprint arXiv:2506.10947}, 2025.

\bibitem[Shao et~al.(2024)Shao, Wang, Zhu, Xu, Song, Bi, Zhang, Zhang, Li, Wu,
  et~al.]{shao2024deepseekmath}
Zhihong Shao, Peiyi Wang, Qihao Zhu, Runxin Xu, Junxiao Song, Xiao Bi, Haowei
  Zhang, Mingchuan Zhang, YK~Li, Yang Wu, et~al.
\newblock Deepseekmath: Pushing the limits of mathematical reasoning in open
  language models.
\newblock \emph{arXiv preprint arXiv:2402.03300}, 2024.

\bibitem[Shrivastava et~al.(2025)Shrivastava, Awadallah, Balachandran, Garg,
  Behl, and Papailiopoulos]{shrivastava2025sample}
Vaishnavi Shrivastava, Ahmed Awadallah, Vidhisha Balachandran, Shivam Garg,
  Harkirat Behl, and Dimitris Papailiopoulos.
\newblock Sample more to think less: Group filtered policy optimization for
  concise reasoning.
\newblock \emph{arXiv preprint arXiv:2508.09726}, 2025.

\bibitem[Snell et~al.(2024)Snell, Lee, Xu, and Kumar]{snell2024scaling}
Charlie Snell, Jaehoon Lee, Kelvin Xu, and Aviral Kumar.
\newblock Scaling llm test-time compute optimally can be more effective than
  scaling model parameters.
\newblock \emph{arXiv preprint arXiv:2408.03314}, 2024.

\bibitem[Spataru et~al.(2024)Spataru, Hambro, Voita, and
  Cancedda]{spataru2024know}
Ava Spataru, Eric Hambro, Elena Voita, and Nicola Cancedda.
\newblock Know when to stop: A study of semantic drift in text generation.
\newblock \emph{arXiv preprint arXiv:2404.05411}, 2024.

\bibitem[Sutton et~al.(1998)Sutton, Barto, et~al.]{sutton1998reinforcement}
Richard~S Sutton, Andrew~G Barto, et~al.
\newblock \emph{Reinforcement learning: An introduction}, volume~1.
\newblock MIT press Cambridge, 1998.

\bibitem[Team et~al.(2025)Team, Du, Gao, Xing, Jiang, Chen, Li, Xiao, Du, Liao,
  et~al.]{team2025kimi}
Kimi Team, Angang Du, Bofei Gao, Bowei Xing, Changjiu Jiang, Cheng Chen, Cheng
  Li, Chenjun Xiao, Chenzhuang Du, Chonghua Liao, et~al.
\newblock Kimi k1. 5: Scaling reinforcement learning with llms.
\newblock \emph{arXiv preprint arXiv:2501.12599}, 2025.

\bibitem[Thrun(1992)]{thrun1992efficient}
Sebastian~B Thrun.
\newblock \emph{Efficient exploration in reinforcement learning}.
\newblock Carnegie Mellon University, 1992.

\bibitem[Wang et~al.(2025{\natexlab{a}})Wang, Zhao, Jiang, Chen, Zhu, Chen,
  Liu, Zhang, Fan, Ma, et~al.]{wcq2025beyond}
Chaoqi Wang, Zhuokai Zhao, Yibo Jiang, Zhaorun Chen, Chen Zhu, Yuxin Chen,
  Jiayi Liu, Lizhu Zhang, Xiangjun Fan, Hao Ma, et~al.
\newblock Beyond reward hacking: Causal rewards for large language model
  alignment.
\newblock \emph{arXiv preprint arXiv:2501.09620}, 2025{\natexlab{a}}.

\bibitem[Wang et~al.(2025{\natexlab{b}})Wang, Yu, Gao, Zheng, Liu, Lu, Dang,
  Chen, Yang, Zhang, et~al.]{wang2025beyond}
Shenzhi Wang, Le~Yu, Chang Gao, Chujie Zheng, Shixuan Liu, Rui Lu, Kai Dang,
  Xionghui Chen, Jianxin Yang, Zhenru Zhang, et~al.
\newblock Beyond the 80/20 rule: High-entropy minority tokens drive effective
  reinforcement learning for llm reasoning.
\newblock \emph{arXiv preprint arXiv:2506.01939}, 2025{\natexlab{b}}.

\bibitem[Wang et~al.(2023)Wang, Wei, Schuurmans, Le, Chi, Narang, Chowdhery,
  and Zhou]{wang2023selfconsistency}
Xuezhi Wang, Jason Wei, Dale Schuurmans, Quoc~V Le, Ed~H. Chi, Sharan Narang,
  Aakanksha Chowdhery, and Denny Zhou.
\newblock Self-consistency improves chain of thought reasoning in language
  models.
\newblock In \emph{The Eleventh International Conference on Learning
  Representations}, 2023.
\newblock URL \url{https://openreview.net/forum?id=1PL1NIMMrw}.

\bibitem[Wang et~al.(2025{\natexlab{c}})Wang, Yang, Zeng, Ren, Liu, Peng,
  Cheng, He, Wang, Gao, et~al.]{wang2025reinforcement}
Yiping Wang, Qing Yang, Zhiyuan Zeng, Liliang Ren, Liyuan Liu, Baolin Peng, Hao
  Cheng, Xuehai He, Kuan Wang, Jianfeng Gao, et~al.
\newblock Reinforcement learning for reasoning in large language models with
  one training example.
\newblock \emph{arXiv preprint arXiv:2504.20571}, 2025{\natexlab{c}}.

\bibitem[Wang et~al.(2025{\natexlab{d}})Wang, Zhou, Li, and
  Liu]{wang2025octothinker}
Zengzhi Wang, Fan Zhou, Xuefeng Li, and Pengfei Liu.
\newblock Octothinker: Mid-training incentivizes reinforcement learning
  scaling.
\newblock \emph{arXiv preprint arXiv:2506.20512}, 2025{\natexlab{d}}.

\bibitem[Wei et~al.(2022)Wei, Wang, Schuurmans, Bosma, Xia, Chi, Le, Zhou,
  et~al.]{wei2022chain}
Jason Wei, Xuezhi Wang, Dale Schuurmans, Maarten Bosma, Fei Xia, Ed~Chi, Quoc~V
  Le, Denny Zhou, et~al.
\newblock Chain-of-thought prompting elicits reasoning in large language
  models.
\newblock \emph{Advances in neural information processing systems},
  35:\penalty0 24824--24837, 2022.

\bibitem[Weng(2024)]{weng2024rewardhack}
Lilian Weng.
\newblock Reward hacking in reinforcement learning.
\newblock \emph{lilianweng.github.io}, Nov 2024.
\newblock URL
  \url{https://lilianweng.github.io/posts/2024-11-28-reward-hacking/}.

\bibitem[Xiong et~al.(2025)Xiong, Yao, Xu, Pang, Wang, Sahoo, Li, Jiang, Zhang,
  Xiong, et~al.]{xiong2025minimalist}
Wei Xiong, Jiarui Yao, Yuhui Xu, Bo~Pang, Lei Wang, Doyen Sahoo, Junnan Li, Nan
  Jiang, Tong Zhang, Caiming Xiong, et~al.
\newblock A minimalist approach to llm reasoning: from rejection sampling to
  reinforce.
\newblock \emph{arXiv preprint arXiv:2504.11343}, 2025.

\bibitem[Xu et~al.(2025)Xu, Savani, Fang, and Kolter]{xu2025not}
Yixuan~Even Xu, Yash Savani, Fei Fang, and Zico Kolter.
\newblock Not all rollouts are useful: Down-sampling rollouts in llm
  reinforcement learning.
\newblock \emph{arXiv preprint arXiv:2504.13818}, 2025.

\bibitem[Yang et~al.(2024)Yang, Yang, Hui, Zheng, Yu, Zhou, Li, Li, Liu, Huang,
  et~al.]{yang2024qwen2}
An~Yang, Baosong Yang, Binyuan Hui, Bo~Zheng, Bowen Yu, Chang Zhou, Chengpeng
  Li, Chengyuan Li, Dayiheng Liu, Fei Huang, et~al.
\newblock Qwen2 technical report.
\newblock \emph{arXiv preprint arXiv:2407.10671}, 2024.

\bibitem[Yang et~al.(2025)Yang, Li, Yang, Zhang, Hui, Zheng, Yu, Gao, Huang,
  Lv, et~al.]{yang2025qwen3}
An~Yang, Anfeng Li, Baosong Yang, Beichen Zhang, Binyuan Hui, Bo~Zheng, Bowen
  Yu, Chang Gao, Chengen Huang, Chenxu Lv, et~al.
\newblock Qwen3 technical report.
\newblock \emph{arXiv preprint arXiv:2505.09388}, 2025.

\bibitem[Yang \& Holtzman(2025)Yang and Holtzman]{yang2025alignment}
Chenghao Yang and Ari Holtzman.
\newblock How alignment shrinks the generative horizon.
\newblock \emph{arXiv preprint arXiv:2506.17871}, 2025.

\bibitem[Yao et~al.(2025)Yao, Liu, Zhang, Dong, Shang, and
  Gao]{yao2025offpolicy}
Feng Yao, Liyuan Liu, Dinghuai Zhang, Chengyu Dong, Jingbo Shang, and Jianfeng
  Gao.
\newblock Your efficient rl framework secretly brings you off-policy rl
  training, August 2025.
\newblock URL \url{https://fengyao.notion.site/off-policy-rl}.

\bibitem[Yu et~al.(2025)Yu, Zhang, Zhu, Yuan, Zuo, Yue, Dai, Fan, Liu, Liu,
  et~al.]{yu2025dapo}
Qiying Yu, Zheng Zhang, Ruofei Zhu, Yufeng Yuan, Xiaochen Zuo, Yu~Yue, Weinan
  Dai, Tiantian Fan, Gaohong Liu, Lingjun Liu, et~al.
\newblock Dapo: An open-source llm reinforcement learning system at scale.
\newblock \emph{arXiv preprint arXiv:2503.14476}, 2025.

\bibitem[Yue et~al.(2025{\natexlab{a}})Yue, Chen, Lu, Zhao, Wang, Song, and
  Huang]{yue2025does}
Yang Yue, Zhiqi Chen, Rui Lu, Andrew Zhao, Zhaokai Wang, Shiji Song, and Gao
  Huang.
\newblock Does reinforcement learning really incentivize reasoning capacity in
  llms beyond the base model?
\newblock \emph{arXiv preprint arXiv:2504.13837}, 2025{\natexlab{a}}.

\bibitem[Yue et~al.(2025{\natexlab{b}})Yue, Yuan, Yu, Zuo, Zhu, Xu, Chen, Wang,
  Fan, Du, et~al.]{yue2025vapo}
Yu~Yue, Yufeng Yuan, Qiying Yu, Xiaochen Zuo, Ruofei Zhu, Wenyuan Xu, Jiaze
  Chen, Chengyi Wang, TianTian Fan, Zhengyin Du, et~al.
\newblock Vapo: Efficient and reliable reinforcement learning for advanced
  reasoning tasks.
\newblock \emph{arXiv preprint arXiv:2504.05118}, 2025{\natexlab{b}}.

\bibitem[Zeng et~al.(2025)Zeng, Huang, Liu, Liu, He, Ma, and
  He]{zeng2025simplerl}
Weihao Zeng, Yuzhen Huang, Qian Liu, Wei Liu, Keqing He, Zejun Ma, and Junxian
  He.
\newblock Simplerl-zoo: Investigating and taming zero reinforcement learning
  for open base models in the wild.
\newblock \emph{arXiv preprint arXiv:2503.18892}, 2025.

\bibitem[Zhang et~al.(2024)Zhang, Bao, and Huang]{zhang2024edt}
Shimao Zhang, Yu~Bao, and Shujian Huang.
\newblock Edt: Improving large language models' generation by entropy-based
  dynamic temperature sampling.
\newblock \emph{arXiv preprint arXiv:2403.14541}, 2024.

\bibitem[Zhao et~al.(2025{\natexlab{a}})Zhao, Meterez, Kakade, Pehlevan,
  Jelassi, and Malach]{zhao2025echo}
Rosie Zhao, Alexandru Meterez, Sham Kakade, Cengiz Pehlevan, Samy Jelassi, and
  Eran Malach.
\newblock Echo chamber: Rl post-training amplifies behaviors learned in
  pretraining.
\newblock \emph{arXiv preprint arXiv:2504.07912}, 2025{\natexlab{a}}.

\bibitem[Zhao et~al.(2025{\natexlab{b}})Zhao, Kang, Feng, Levine, and
  Song]{zhao2025learning}
Xuandong Zhao, Zhewei Kang, Aosong Feng, Sergey Levine, and Dawn Song.
\newblock Learning to reason without external rewards.
\newblock \emph{arXiv preprint arXiv:2505.19590}, 2025{\natexlab{b}}.

\bibitem[Zheng et~al.(2025{\natexlab{a}})Zheng, Liu, Li, Chen, Yu, Gao, Dang,
  Liu, Men, Yang, et~al.]{zheng2025group}
Chujie Zheng, Shixuan Liu, Mingze Li, Xiong-Hui Chen, Bowen Yu, Chang Gao, Kai
  Dang, Yuqiong Liu, Rui Men, An~Yang, et~al.
\newblock Group sequence policy optimization.
\newblock \emph{arXiv preprint arXiv:2507.18071}, 2025{\natexlab{a}}.

\bibitem[Zheng et~al.(2025{\natexlab{b}})Zheng, Zhou, Bartoldson, Kailkhura,
  Lai, Zhao, and Chen]{zheng2025act}
Haizhong Zheng, Yang Zhou, Brian~R Bartoldson, Bhavya Kailkhura, Fan Lai,
  Jiawei Zhao, and Beidi Chen.
\newblock Act only when it pays: Efficient reinforcement learning for llm
  reasoning via selective rollouts.
\newblock \emph{arXiv preprint arXiv:2506.02177}, 2025{\natexlab{b}}.

\bibitem[Ziegler et~al.(2019)Ziegler, Stiennon, Wu, Brown, Radford, Amodei,
  Christiano, and Irving]{ziegler2019fine}
Daniel~M Ziegler, Nisan Stiennon, Jeffrey Wu, Tom~B Brown, Alec Radford, Dario
  Amodei, Paul Christiano, and Geoffrey Irving.
\newblock Fine-tuning language models from human preferences.
\newblock \emph{arXiv preprint arXiv:1909.08593}, 2019.

\bibitem[Zuo et~al.(2025)Zuo, Zhang, Sheng, Qu, Cui, Zhu, Li, Zhang, Long, Hua,
  et~al.]{zuo2025ttrl}
Yuxin Zuo, Kaiyan Zhang, Li~Sheng, Shang Qu, Ganqu Cui, Xuekai Zhu, Haozhan Li,
  Yuchen Zhang, Xinwei Long, Ermo Hua, et~al.
\newblock Ttrl: Test-time reinforcement learning.
\newblock \emph{arXiv preprint arXiv:2504.16084}, 2025.

\end{thebibliography}
\bibliographystyle{iclr2026_conference}

\newpage
\appendix
\section{Minimal-RL Training Details}
\label{app: minimal_rl_training_details}
We mainly follow the Minimal-RL recipe~\citep{xiong2025minimalist} in our experiments to ensure a fair comparison among different rollout sampling strategies. Specifically, we set a series of hyperparameters as in \cref{tab: hyperparameter}:

\begin{table}[h!]
\centering
\begin{tabular}{@{}cc@{}}
\toprule
Hyperparameter                                 & Value(s)        \\ 
\midrule
\multicolumn{1}{c}{Training Batch Size}      & \multicolumn{1}{c}{1024} \\ 
\multicolumn{1}{c}{Max Prompt Length}        & \multicolumn{1}{c}{1024} \\ 
\multicolumn{1}{c}{Max Response Length}      & \multicolumn{1}{c}{3072} \\ 
\multicolumn{1}{c}{Mini Batch Size}          & \multicolumn{1}{c}{256}  \\ 
\multicolumn{1}{c}{Micro Batch Size Per GPU} & \multicolumn{1}{c}{4}    \\ 
\multicolumn{1}{c}{Learning Rate}            & \multicolumn{1}{c}{$10^{-6}$} \\ 
\bottomrule
\end{tabular}%
\caption{Hyperparameter Setup for Running Minimal-RL recipe. }
\label{tab: hyperparameter}
\end{table}

\section{Off-policy Issue and Truncated Importance Sampling Correction}
\label{sec:off-policy}
\subsection{Sampling Techniques Can Introduce Off-Policy Issue}
\label{subsec:temperature-sampling}
One subtle yet troublesome drawback of reinforcement learning with sampling techniques is that it simultaneously introduces the \emph{off-policy} problem: there is a gap between the behavior policy (used for sampling) and the target policy (being optimized and parametrized by $\theta$). This might introduce instability to the training and cause it to fail (See example training of RLVR with EAD \cref{fig:optim_failure_for_as_offpolicy}).

\begin{figure}[h!]
    \centering
    \begin{subfigure}[b]{0.26\textwidth}
    \centering
    \includegraphics[width=\linewidth]{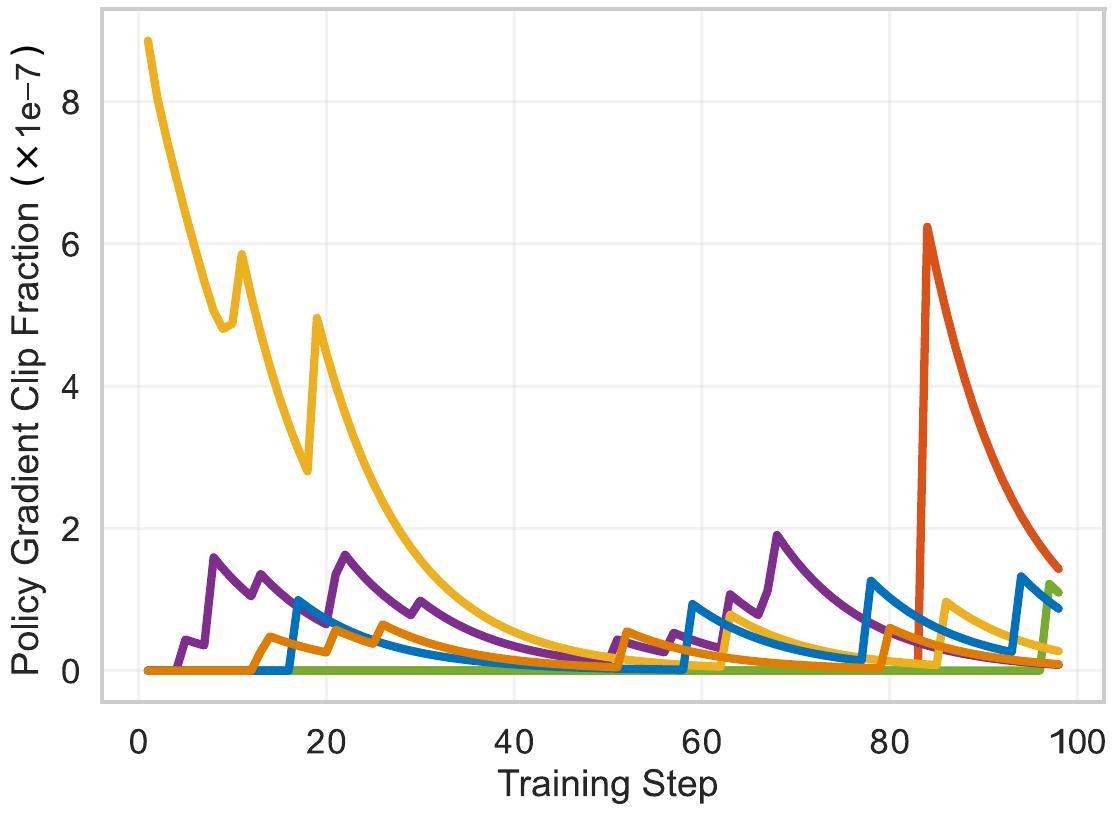}
    \caption{Clip Fraction Surge}
    \label{fig: off_policy_pg_clipfrac}
  \end{subfigure}
  \begin{subfigure}[b]{0.28\textwidth}
    \centering
    \includegraphics[width=\linewidth]{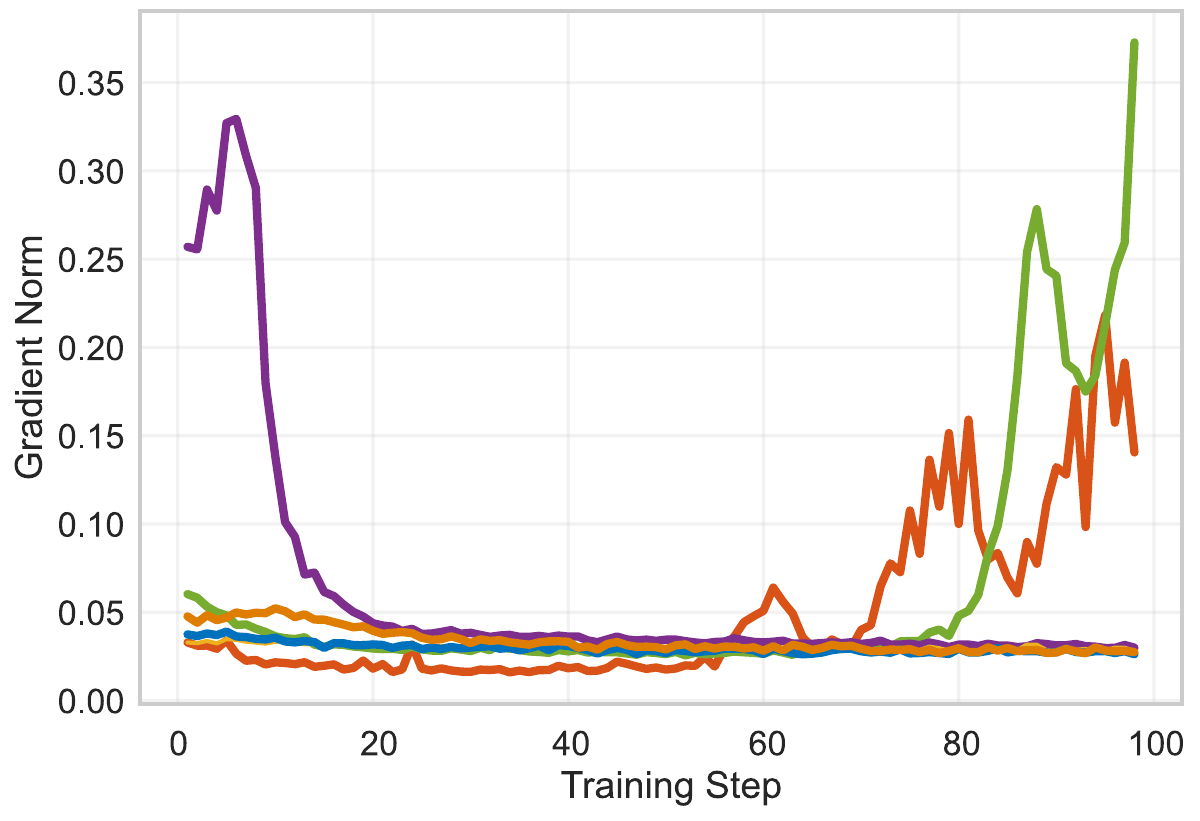}
    \caption{Gradient Norm Surge.}
    \label{fig: off_policy_grad_norm}
  \end{subfigure}
   \begin{subfigure}[b]{0.27\textwidth}
    \centering
    \includegraphics[width=\linewidth]{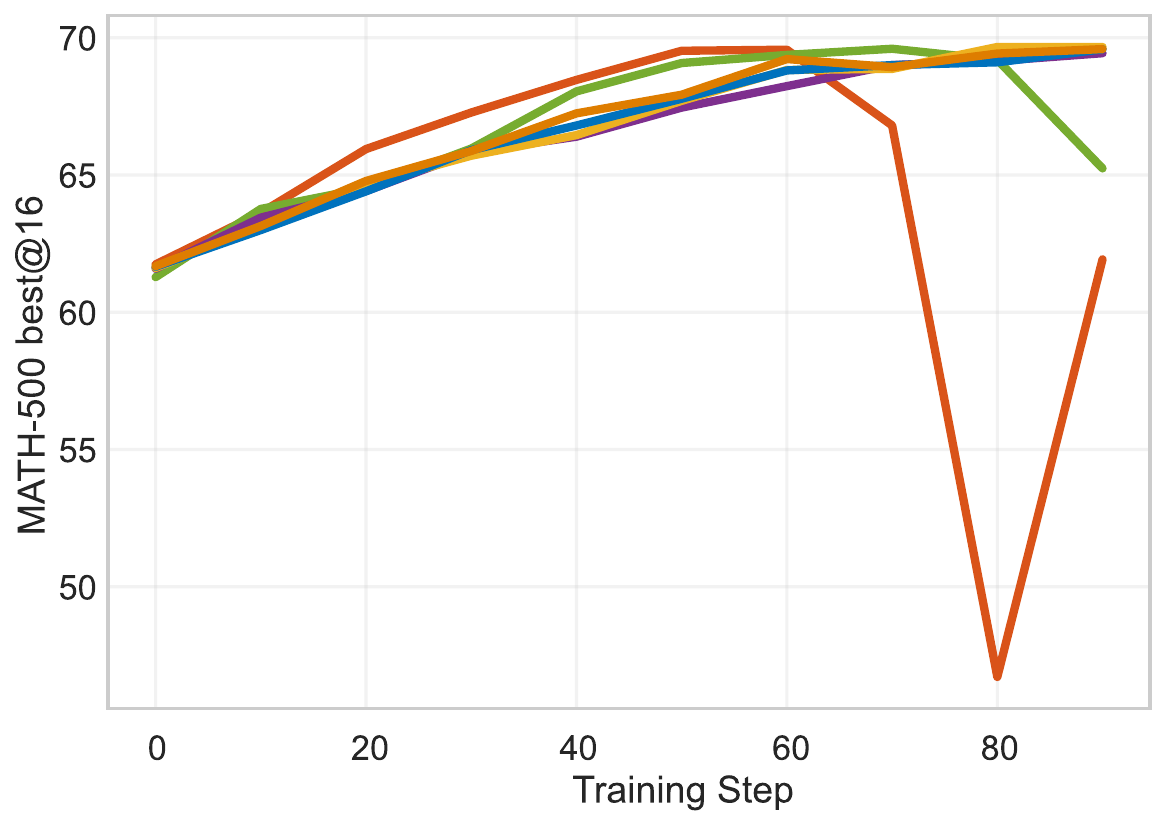}
    \caption{Drastic Best@16 Drop}
     \label{fig: off_policy_best_at_16}
  \end{subfigure}
  \begin{subfigure}[t]{0.1\textwidth}
    \centering
    \vspace{-2.5cm}
    \includegraphics[width=\linewidth]{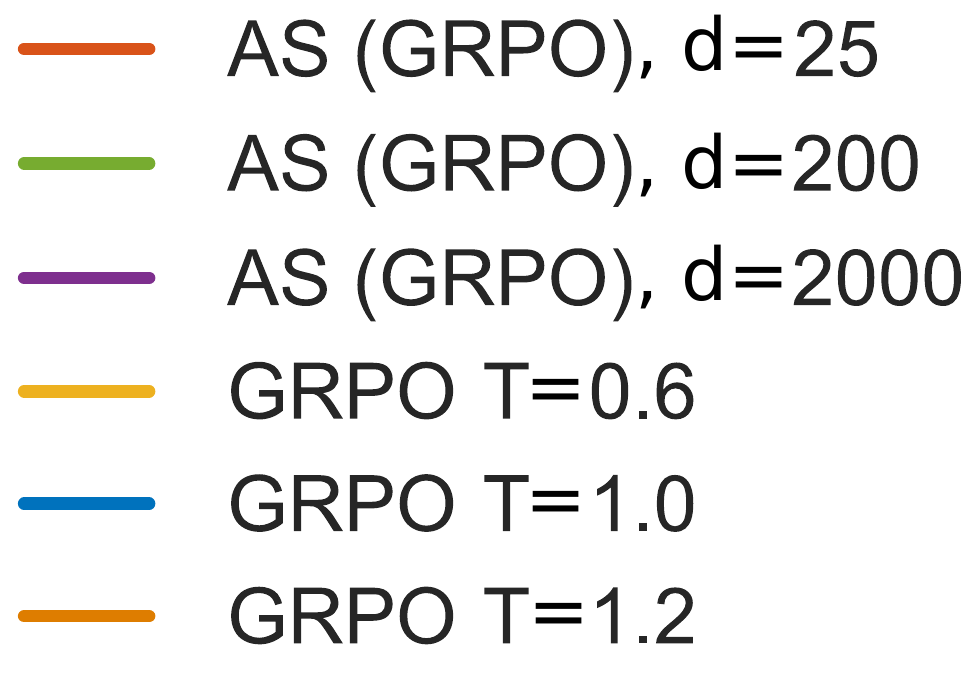}
  \end{subfigure}
  \caption{Off-policy samples bring training instability. The base model is Qwen2.5-Math-1.5B.}
  \label{fig:optim_failure_for_as_offpolicy}
\end{figure}

Noted that this off-policy phenomenon widely exists for any efficient sampling framework \citep{yao2025offpolicy} and sampling strategy (for instance, when applying best-of-$n$ sampling \citep{xiong2025minimalist} or filtering out responses \citep{shrivastava2025sample}, the underlying distribution of responses is implicitly altered). To be more precise, we take the policy gradient loss as an example:
\begin{align*}
\E_{x \sim \mathcal{D},y{\sim} \mathbin{\color{red}\pi^{\text{sampling}}_{\theta_{\text{old}}}(\cdot\mid x)}}
\left[\frac{\pi_{\theta}(y\mid x)}{\pi_{\theta_{\text{old}}}(y\mid x)}A(y;x)\right] 
=\E_{x \sim \mathcal{D},y{\sim} \mathbin{\pi_{\theta_{\text{old}}}(\cdot\mid x)}}\left[{\color{red}\frac{\pi^{\text{sampling}}_{\theta_{\text{old}}}(y\mid x)}{\pi_{\theta_{\text{old}}}(y\mid x)}}\times\frac{\pi_{\theta}(y\mid x)}{\pi_{\theta_{\text{old}}}(y\mid x)}A(y;x)\right],
\end{align*}
where $\pi^{\text{sampling}}_{\theta_{\text{old}}}(\cdot\mid x)$ represents the underlying sampling distribution. 
In such case, an extra weight is implicitly added to each response in addition to its advantage $A(y;x)$.

This extra weight can significantly inflate the variance of the policy gradient, posing a stability challenge that our proposed \alg needs to mitigate.
To quantify this effect in our proposed \alg, we now analyze such variance under a standard, fixed \textbf{temperature sampling}.

We use $\tau=1$ to define a policy $\pi$ and consider the effect of $\tau$ on the variance of the gradient estimator. We reduce the problem to analyzing the variance inflation factor
\begin{equation}
\label{eq:variance-inflation-factor}
\mathbb{E}_{y\sim\pi(\cdot\mid x;\tau)}\left[\frac{\pi(y\mid x;1)^2}{\pi(y\mid x;\tau)^2}\right].
\end{equation}
We begin with one-token case. Let $o_i$ denote the $i$th token in the vocabulary $V$ and $h_i$ is its logit. Then \eqref{eq:variance-inflation-factor} can be rewritten as
\begin{align*}
    \sum_{i=1}^{|V|}\frac{h_i/(\sum_{j=1}^{|V|} h_j)}{h^{1/\tau}_i/(\sum_{j=1}^{|V|} h^{1/\tau}_j)}\times\frac{h_i}{\sum_{j=1}^{|V|} h_j}
    = \frac{\sum_{i=1}^{|V|} h_i^{2-1/\tau}\sum_{i=1}^{|V|} h^{1/\tau}_i}{\left(\sum_{i=1}^{|V|} h_i\right)^2}
\end{align*}

\begin{proposition}
Suppose $h_i\in[0,1]$ for all $i\in V$.
$\sum_{i=1}^{|V|} h_i^{2-1/\tau}\sum_{i=1}^{|V|} h^{1/\tau}_i$ is decreasing when $\tau\le1$ and increasing when $\tau\ge1$, which implies it has a global minimum at $\tau=1$.
\end{proposition}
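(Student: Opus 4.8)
The plan is to reduce the whole statement to a single-variable convexity fact through the substitution $u = 1/\tau$, since the quantity to be analyzed is exactly $g(u) := \left(\sum_i h_i^{2-u}\right)\left(\sum_i h_i^{u}\right)$ evaluated at $u = 1/\tau$. Because $\tau \mapsto 1/\tau$ is a strictly decreasing bijection of $(0,\infty)$ onto itself with $u=1 \Leftrightarrow \tau=1$, it suffices to prove that $g$ attains a global minimum at $u=1$, decreasing for $u<1$ and increasing for $u>1$; the claim in $\tau$ then follows by the chain rule $\frac{d}{d\tau} = -\tau^{-2}\frac{d}{du}$, which reverses orientation. I will take $h_i \in (0,1]$ so that all powers are well defined, noting that exact-zero entries contribute nothing for the relevant positive exponents and can be dropped (strict positivity is anyway the generic case, as the $h_i$ act like softmax probabilities).

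The engine of the proof is the logarithmic derivative. Writing $A(s) := \sum_i h_i^{s}$ so that $g(u) = A(2-u)A(u)$, I would first record the structural symmetry $g(2-u) = A(u)A(2-u) = g(u)$ about $u=1$, and then compute
$$\frac{g'(u)}{g(u)} = \frac{d}{du}\big[\ln A(2-u) + \ln A(u)\big] = \phi(u) - \phi(2-u), \qquad \phi(s) := \frac{A'(s)}{A(s)} = (\ln A)'(s).$$
Since $g(u) > 0$, the sign of $g'(u)$ is exactly the sign of $\phi(u) - \phi(2-u)$, so the entire problem collapses to understanding the monotonicity of $\phi$.

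The crux is therefore to show that $\phi$ is nondecreasing, i.e. that $A$ is log-convex. This follows from $(\ln A)''(s) = \big(A''(s)A(s) - A'(s)^2\big)/A(s)^2 \ge 0$, whose numerator is nonnegative by the Cauchy--Schwarz inequality applied with weights $h_i^{s}$: writing $\ell_i := \ln h_i$, one has $\big(\sum_i h_i^{s}\ell_i^2\big)\big(\sum_i h_i^{s}\big) \ge \big(\sum_i h_i^{s}\ell_i\big)^2$. Equivalently, $\ln A(s) = \ln\sum_i e^{s\ell_i}$ is a log-sum-exp of affine functions and hence convex. Granting monotonicity of $\phi$, for $u>1$ we have $u > 2-u$ and thus $\phi(u) \ge \phi(2-u)$, so $g'(u) \ge 0$; for $u<1$ the inequality reverses, giving $g'(u) \le 0$. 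Hence $g$ decreases on $(0,1)$, increases on $(1,\infty)$, and is minimized at $u=1$.

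Transporting back, for $\tau<1$ we have $u>1$ where $g$ increases in $u$, while $u=1/\tau$ decreases as $\tau$ grows, so $f(\tau)=g(1/\tau)$ decreases on $(0,1]$; for $\tau>1$ we have $u<1$ where $g$ decreases in $u$, so $f$ increases on $[1,\infty)$; the critical point $u=1$ maps to $\tau=1$. This is exactly the asserted behavior. The only genuine obstacle is the log-convexity step, which is the sole place real content enters; the remaining fiddly point is bookkeeping at the boundary, namely the degenerate cases (a single nonzero $h_i$, which makes $\phi$ constant and $g$ flat, yielding weak rather than strict monotonicity) and verifying that the orientation-reversing substitution is applied with the correct sign, both of which I would dispatch in a short remark.
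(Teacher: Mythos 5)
Your proof is correct and takes essentially the same route as the paper's: the substitution $u=1/\tau$, passing to the logarithm so that the sign of the derivative is that of $\phi(u)-\phi(2-u)$ with $\phi=(\ln A)'$ (which is exactly the paper's helper function), and establishing that $\phi$ is nondecreasing via the Cauchy--Schwarz/variance inequality. Your extra touches (the symmetry $g(2-u)=g(u)$, the log-sum-exp view of log-convexity, and the handling of zero entries and degenerate cases) are refinements of the same argument rather than a different approach.
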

\begin{proof}
Let $x=1/\tau$. We define 
\[
f(x)=\log\left(\sum_{i=1}^{|V|} h_i^{2-x}\right)+\log\left(\sum_{i=1}^{|V|} h^{x}_i\right).
\]
Its derivative is
\[
f'(x)=\frac{-\sum_{i=1}^{|V|} h_i^{2-x}\log h_i}{\sum_{i=1}^{|V|} h_i^{2-x}}
+\frac{\sum_{i=1}^{|V|} h_i^{x}\log h_i}{\sum_{i=1}^{|V|} h_i^{x}}.
\]
To analyze the sign of $f'(x)$, we define a helper function $g(x) = \frac{\sum_{i=1}^{|V|} h_i^{x}\log h_i}{\sum_{i=1}^{|V|} h_i^{x}}$. Then, $f'(x)=g(x)-g(2-x)$ and its sign depends on whether $g(x)$ is greater than, less than, or equal to $g(2-x)$. We take a look at derivative of $g$:
\[
g'(x) = \frac{\left(\sum_{i=1}^{|V|} h_i^{x}(\log h_i)^2\right) \left(\sum_{i=1}^{|V|} h_i^{x}\right) - \left(\sum_{i=1}^{|V|} h_i^{x}\log h_i\right)^2}{\left(\sum_{i=1}^{|V|} h_i^{x}\right)^2}\ge0.
\]
Hence, $g$ is an increasing function and
\begin{equation*}
\begin{cases}
~f'(x)=g(x)-g(2-x)\ge0,~~\mathrm{when}~x\ge1\\
~f'(x)=g(x)-g(2-x)=0,~~\mathrm{when}~x=1\\
~f'(x)=g(x)-g(2-x)\le0,~~\mathrm{when}~x\le1.
\end{cases}
\end{equation*}
Accordingly, $f$ is increasing when $x\ge1$ and is decreasing when $x\le1$. Then the proposition easily follows.
\end{proof}

The same conclusion can be proved for multiple-token sequence by induction. Therefore, we get that
when the temperature is far from 1, the off-policy issue could be severe and lead to large variance of the gradient estimator. 

\subsection{Truncated Importance Sampling Ratio Correction}
\label{subsec:tis-correction}
To cancel such bias, an importance sampling ratio can be introduced~\citep{hilton2022batch,yao2025offpolicy}:
\begin{align*}
\E_{x \sim \mathcal{D},y{\sim} \mathbin{\pi_{\theta_{\text{old}}}(\cdot\mid x)}}\left[\frac{\pi_{\theta}(y\mid x)}{\pi_{\theta_{\text{old}}}(y\mid x)}A(y;x)\right]
=
\E_{x \sim \mathcal{D},y{\sim} \mathbin{\color{red}\pi^{\text{sampling}}_{\theta_{\text{old}}}(\cdot\mid x)}}
\left[{\color{red}\frac{\pi_{\theta_{\text{old}}}(y\mid x)}{\pi^{\text{sampling}}_{\theta_{\text{old}}}(y\mid x)}}\times\frac{\pi_{\theta}(y\mid x)}{\pi_{\theta_{\text{old}}}(y\mid x)}A(y;x)\right]. 
\end{align*}
To further prevent negative effects by the extreme likelihood ratios and boost training stability, we truncate the likelihood ratio with an upper bound. That is, \emph{truncated importance sampling} technique \citep{heckman1998matching}. Taking the vanilla policy gradient loss as an example, the modified loss for EAD is as follows:
\begin{equation*}
\label{eq:tis-pg}
\E_{x \sim \mathcal{D},y{\sim}{\pi^{\text{EAD}}_{\theta_{\text{old}}}(\cdot\mid x)}}
\left[{\color{blue}\min\left(\frac{\pi_{\theta_{\text{old}}}(y\mid x)}{\pi^{\text{EAD}}_{\theta_{\text{old}}}(y\mid x)},\varepsilon\right)}\frac{\pi_{\theta}(y\mid x)}{\pi_{\theta_{\text{old}}}(y\mid x)}A(y;x)\right].
\end{equation*}

\newpage
\section{Proof of Temperature-Entropy Relationship}
\label{app:entropy_proof}

\begin{proposition}
The entropy of the softmax distribution is a non-decreasing function of the temperature $\tau > 0$.
\end{proposition}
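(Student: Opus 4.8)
The plan is to reparametrize by the inverse temperature $\beta = 1/\tau$, under which the softmax over a fixed logit vector $h$ becomes an exponential family $p_i(\beta) = e^{\beta h_i}/Z(\beta)$ with partition function $Z(\beta) = \sum_{i=1}^{|V|} e^{\beta h_i}$, and then to exploit the standard cumulant identities for such families. Since $\beta$ is a strictly decreasing function of $\tau$ on $\tau > 0$, showing that the entropy is non-increasing in $\beta$ is equivalent to the desired claim that it is non-decreasing in $\tau$.

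First I would put the entropy in closed form. Using $\log p_i(\beta) = \beta h_i - \log Z(\beta)$, the entropy becomes
\[
H(\beta) = -\sum_{i=1}^{|V|} p_i(\beta) \log p_i(\beta) = \log Z(\beta) - \beta\, \E_{p(\beta)}[h],
\]
where $\E_{p(\beta)}[h] = \sum_{i=1}^{|V|} p_i(\beta)\, h_i$. The two structural facts I would invoke are the cumulant identities $\frac{d}{d\beta}\log Z(\beta) = \E_{p(\beta)}[h]$ and $\frac{d}{d\beta}\E_{p(\beta)}[h] = \Var_{p(\beta)}(h)$; both follow from direct differentiation (quotient rule) of the ratios defining $Z$ and the mean, the second being precisely the second derivative of $\log Z$.

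Next I would differentiate $H$ in $\beta$. The first identity makes the derivative of $\log Z$ cancel against the term arising from the product rule on $-\beta\,\E[h]$, leaving
\[
\frac{dH}{d\beta} = \E_{p(\beta)}[h] - \E_{p(\beta)}[h] - \beta\,\frac{d}{d\beta}\E_{p(\beta)}[h] = -\beta\,\Var_{p(\beta)}(h).
\]
Since a variance is nonnegative and $\beta = 1/\tau > 0$, this gives $\frac{dH}{d\beta} \le 0$. Applying the chain rule with $\frac{d\beta}{d\tau} = -1/\tau^2 < 0$ then yields $\frac{dH}{d\tau} = \frac{1}{\tau^3}\,\Var_{p(\beta)}(h) \ge 0$, which is exactly the claimed monotonicity.

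I do not anticipate a serious obstacle: the result is an elementary consequence of the exponential-family structure once the inverse-temperature substitution is made, and the main work is simply verifying the variance identity. The only points requiring minor care are checking that identity by the quotient rule and noting the degenerate case in which all logits are equal, where the variance vanishes and $H$ is constant---consistent with the non-strict (``non-decreasing'') statement rather than strict monotonicity.
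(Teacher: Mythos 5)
Your proposal is correct and follows essentially the same route as the paper's proof: reparametrize by the inverse temperature $\beta = 1/\tau$, write $H(\beta) = \log Z(\beta) - \beta\,\E_{p(\beta)}[h]$, use the cumulant identities to obtain $\frac{dH}{d\beta} = -\beta\,\Var_{p(\beta)}(h) \le 0$, and conclude by monotonicity of the substitution. Your explicit chain-rule computation $\frac{dH}{d\tau} = \frac{1}{\tau^3}\Var_{p(\beta)}(h)$ is a slightly more careful finish than the paper's verbal conclusion, but the argument is identical in substance.
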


\begin{proof}
The strategy is to show that the entropy $H$ is a non-increasing function of the inverse temperature $\beta = 1/\tau > 0$.
The probability of sampling token $v$ with temperature $\tau$ is given by the policy $\pi_\theta$. For simplicity in the derivation, we denote this probability as $p_v(\tau)$:
\[
p_v(\tau) \triangleq \pi_\theta(y_t=v \mid [x, y_{<t}]; \tau)
\]

Let $h_v$ be the logit for a token $v$ in the vocabulary $V$. The probability of a token as a function of $\beta$ is given by:
\[
p_v(\beta) = \frac{\exp(\beta h_v)}{\sum_{v' \in V} \exp(\beta h_{v'})} \triangleq \frac{\exp(\beta h_v)}{Z(\beta)},
\]
where $Z(\beta)$ is the partition function. The entropy, as a function of $\beta$, is:
\[
H(\beta) = -\sum_{v \in V} p_v(\beta) \log p_v(\beta).
\]
We can rewrite the entropy by substituting $\log p_v(\beta) = \beta h_v - \log Z(\beta)$:
\begin{align*}
H(\beta) &= -\sum_{v \in V} p_v(\beta) (\beta h_v - \log Z(\beta)) \\
&= \log Z(\beta) \left(\sum_{v \in V} p_v(\beta)\right) - \beta \sum_{v \in V} h_v p_v(\beta) \\
&= \log Z(\beta) - \beta \cdot \mathbb{E}_{v \sim p(\beta)}[h_v].
\end{align*}
Now, we differentiate $H(\beta)$ with respect to $\beta$. Let $\bar{h}(\beta) = \mathbb{E}[h_v]$.
\[
\frac{dH}{d\beta} = \frac{d}{d\beta}(\log Z(\beta)) - \frac{d}{d\beta}(\beta \bar{h}(\beta)).
\]
First, we find the derivative of the log-partition function:
\[
\frac{d}{d\beta}(\log Z(\beta)) = \frac{Z'(\beta)}{Z(\beta)} = \frac{\sum_v h_v \exp(\beta h_v)}{Z(\beta)} = \sum_v h_v p_v(\beta) = \bar{h}(\beta).
\]
Next, we use the product rule for the second term:
\[
\frac{d}{d\beta}(\beta \bar{h}(\beta)) = \bar{h}(\beta) + \beta \frac{d\bar{h}}{d\beta}.
\]
Combining these gives:
\[
\frac{dH}{d\beta} = \bar{h}(\beta) - \left(\bar{h}(\beta) + \beta \frac{d\bar{h}}{d\beta}\right) = -\beta \frac{d\bar{h}}{d\beta}.
\]
The derivative $\frac{d\bar{h}}{d\beta}$ is the variance of the logits. We can show this by differentiating $\bar{h}(\beta)$:
\begin{align*}
\frac{d\bar{h}}{d\beta} &= \frac{d}{d\beta}\left( \frac{\sum_v h_v \exp(\beta h_v)}{Z(\beta)} \right) \\
&= \frac{(\sum_v h_v^2 \exp(\beta h_v))Z(\beta) - (\sum_v h_v \exp(\beta h_v))Z'(\beta)}{Z(\beta)^2} \\
&= \sum_v h_v^2 p_v(\beta) - \left(\sum_v h_v p_v(\beta)\right)\left(\frac{Z'(\beta)}{Z(\beta)}\right) \\
&= \mathbb{E}[h^2] - (\mathbb{E}[h])^2 = \mathrm{Var}_{v \sim p(\beta)}(h_v).
\end{align*}
Substituting this back, we arrive at the final expression for the derivative of entropy:
\[
\frac{dH}{d\beta} = -\beta \cdot \mathrm{Var}_{v \sim p(\beta)}(h_v).
\]
By definition, the temperature $\tau > 0$, so the inverse temperature $\beta > 0$. The variance of any random variable is non-negative. This can be formally shown using \textbf{Jensen's inequality}: for the convex function $\phi(x)=x^2$, we have $\mathbb{E}[\phi(h)] \ge \phi(\mathbb{E}[h])$, which means $\mathbb{E}[h^2] \ge (\mathbb{E}[h])^2$, and thus $\mathrm{Var}(h) \ge 0$.

Therefore, the derivative of entropy with respect to $\beta$ is non-positive:
\[
\frac{dH}{d\beta} = \underbrace{-\beta}_{\le 0} \cdot \underbrace{\mathrm{Var}(h_v)}_{\ge 0} \le 0.
\]
Since $H(\beta)$ is a non-increasing function of $\beta$, and $\beta$ is inversely proportional to $T$, it follows that $H(\tau)$ must be a non-decreasing function of the temperature $\tau$.
\end{proof}

\section{Increasing Length in RL Training}
\label{app: increased_length}
During RL training, our algorithm (\alg) incentivizes the model to generate longer, more effective reasoning chains for difficult problems, especially for 7B models (\cref{fig: response_length}). While both \alg and temperature sampling initially learn to shorten their responses by shifting from complex code-based solutions to direct mathematical reasoning, their behavior later diverges. The response length from temperature sampling stabilizes, whereas \alg learns to selectively increase reasoning length for harder problems, which boosts final performance. For these experiments, EAD is applied in the DAPO algorithm for sampling rollouts. We use the same training setup as detailed in \cref{sec:experiments}. 

\begin{figure}[h!]
\centering
\includegraphics[width=\linewidth]{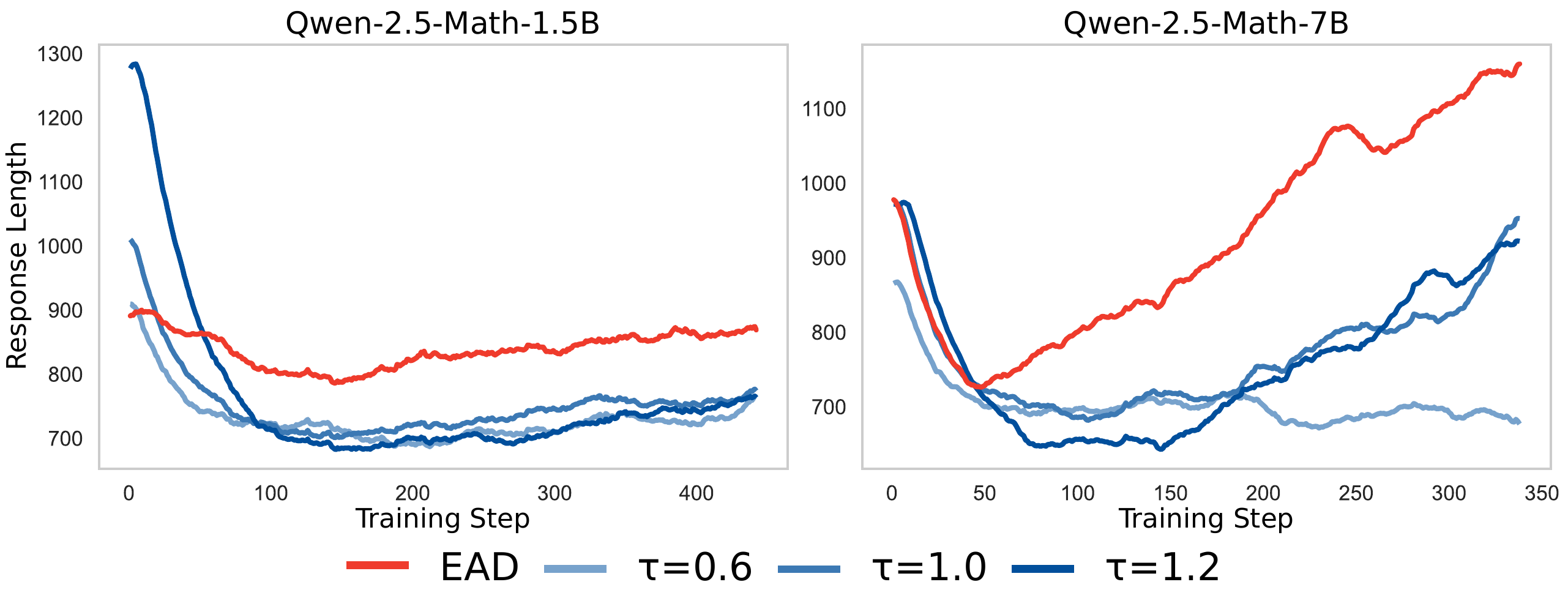}
\caption{Compared with normal temperature sampling, EAD can naturally incentivize the model to generate longer reasoning chains. }
\label{fig: response_length}
\end{figure}

\end{document}